\documentclass[10pt]{article}

\usepackage[utf8]{inputenc}
\usepackage[T1]{fontenc}
\usepackage{lmodern}

\usepackage[letterpaper,margin=0.7in,columnsep=0.28in,top=0.85in,bottom=0.95in]{geometry}

\usepackage{microtype}
\usepackage{graphicx}
\usepackage{subcaption}
\usepackage{booktabs}
\usepackage[colorlinks=true,linkcolor=blue!50!black,citecolor=blue!50!black,urlcolor=blue!50!black]{hyperref}

\usepackage{xurl}
\usepackage[round,authoryear]{natbib}
\bibpunct{(}{)}{;}{a}{,}{,}
\let\cite\citep

\usepackage{amsmath}
\usepackage{amssymb}
\usepackage{mathtools}
\usepackage{amsthm}
\usepackage{amsfonts}

\usepackage{multirow}
\usepackage{makecell}
\usepackage{algorithm}
\usepackage{algorithmic}
\usepackage{array}
\usepackage{enumitem}
\usepackage{nicefrac}
\usepackage{xcolor}
\usepackage{url}
\usepackage{tikz}
\usetikzlibrary{positioning,arrows.meta,shapes.geometric,fit,calc,backgrounds,decorations.pathreplacing,decorations.markings}

\newcommand{\Sk}{S^{k-1}}
\newcommand{\Dstar}{D^{\star}}

\newcommand{\PhiOp}{\Phi}
\newcommand{\Mt}{M_t}

\newcommand{\Mmol}{M_{\mathrm{mol}}}
\newcommand{\bR}{\mathbb{R}}

\newcommand{\Geg}[2]{C_{#1}^{(#2)}}
\newcommand{\norm}[1]{\left\|#1\right\|}

\DeclareMathOperator{\softmax}{softmax}

\theoremstyle{plain}
\newtheorem{theorem}{Theorem}

\newtheorem{corollary}[theorem]{Corollary}
\theoremstyle{definition}

\title{Chem-GMNet: A Sphere-Native Geometric Transformer for\\
       Molecular Property Prediction}

\author{%
  Deepak Warrier$^{\ast}$ \\
  \texttt{deepak@mstack.co}
  \and
  Raja Sekhar Pappala$^{\ast}$ \\
  \texttt{raja.sekhar@mstack.co}
}

\date{MSTACK AI \\[3pt]
  {\small $^{\ast}$\,Equal contribution.} \\[3pt]
  \today}

\begin{document}

\makeatletter
\twocolumn[%
\begin{@twocolumnfalse}
\maketitle
\begin{abstract}
\noindent
Modern SMILES-based chemical language models obtain strong MoleculeNet
performance by treating SMILES as generic text and compensating with
multi-million-molecule self-supervised pretraining. We ask: when a domain
carries structural priors as rich as chemistry's, does it warrant a
domain-native transformer rather than a generic one rescued by scale? We
answer affirmatively with \textbf{GM-Net} (Geometric Measure Network), a
transformer family in which every module is replaced by a sphere-native
counterpart, and instantiate it as \textbf{Chem-GMNet}. Three blocks follow:
SH-Embedding (tokens as learnable directions on $S^{k-1}$ lifted through a
Gegenbauer feature map); DualSKA (a per-head fusion of a linear-time gated
Sphere-Flow recurrence whose persistent state we prove is the truncated
multipole expansion of the input distribution, and a softmax Sphere-Kernel
branch over the same Schoenberg-valid kernel); and SH-FFN (sphere
projection $\to$ Gegenbauer lift $\to$ moment readout). On canonical
DeepChem scaffold splits, against same-shape ChemBERTa-2 baselines under
the chemberta3-faithful protocol: (i) random-initialised, Chem-GMNet wins
on 7 of 10 MoleculeNet endpoints at $\sim\!35\%$ fewer parameters; (ii)
pretrained on the same 10M-SMILES ZINC corpus as ChemBERTa-2 MLM-10M, it
matches or beats the public release on 6 of 8 shared endpoints (5/7
excluding a known ClinTox release anomaly). A $(k,L)$ ablation shows that
increasing the sphere dimension from $k\!=\!8$ to $k\!=\!10$ at fixed
$L\!=\!3$ lowers ESOL RMSE to $0.938$ at scratch, beating pretrained
ChemBERTa-2 MLM-10M on this endpoint without any pretraining at all.
\end{abstract}
\vspace{1.2em}
\end{@twocolumnfalse}
]
\makeatother

\section{Introduction}
\label{sec:intro}

Chemistry is a domain whose entities carry unusually rich structural priors:
atoms have valences and electronegativities, bonds have orders and angles,
molecules have rings, scaffolds, conjugated systems, and well-defined
multipole expansions. Yet the dominant neural architectures for molecular
property prediction either ignore that structure---treating SMILES strings as
generic text and recovering chemistry through self-supervised pretraining at
the scale of tens of millions of
molecules~\cite{chithrananda2020chemberta,chithrananda2022chemberta2,fabian2020molecular}---or
exploit it only partially through 3D-equivariant message passing on
conformer graphs~\cite{schutt2017schnet,gasteiger2020dimenet,batzner2022e3,batatia2022mace,liao2023equiformer}
that requires expensive geometry as input. This paper asks a more
fundamental question: \emph{when a domain carries structural priors as rich
as chemistry's, does it warrant a domain-native architecture rather than a
generic transformer rescued by scale?}

We answer affirmatively, and the answer is constructive. We introduce
\textbf{GM-Net} (\emph{Geometric Measure Network}), a geometry-first
transformer family in which every standard module is replaced by a
counterpart that operates on the unit hypersphere $S^{k-1}$, and we
instantiate it as \textbf{Chem-GMNet} for molecular property prediction.
The name names the two commitments of the framework: a \emph{geometric}
domain (the sphere $S^{k-1}$, on which all features and kernels live) and
a \emph{measure-theoretic} treatment of tokens (each input position is a
discrete signed measure on the sphere, and the architecture's persistent
state is identifiable with the harmonic moments of that measure).
The reformulation is measure-theoretic in a precise sense: the discrete
optimization over Euclidean token vectors, on which standard transformers
operate, is recast as an optimization over signed measures on $S^{k-1}$.
This change of viewpoint unlocks three classical results---the
Stone--Weierstrass theorem on the completeness of the spherical-harmonic
basis, Schoenberg's characterization of positive-definite kernels on
$S^{k-1}$ as non-negative Gegenbauer
expansions~\cite{schoenberg1942}, and the multipole expansion of a discrete
charge distribution---and we use each of them concretely. Stone--Weierstrass
guarantees that any continuous function on the sphere can be approximated
arbitrarily well by a finite spherical-harmonic feature map. Schoenberg
guarantees that the inner products in that feature space are valid Mercer
kernels, and therefore that any attention computed through them is
positive-definite regardless of how the underlying token directions are
learned. The multipole theorem identifies the persistent state of one of
our attention branches with the truncated multipole expansion of the
molecular charge distribution, giving the model a closed-form physical
interpretation.

Three sphere-native modules follow (Figure~\ref{fig:pipeline}).
\textbf{SH-Embedding} represents each token as a learnable direction on
$S^{k-1}$ lifted through a Gegenbauer feature map
$\PhiOp:S^{k-1}\to\bR^{\Dstar}$, replacing both the embedding table and the
learned position embedding. \textbf{DualSKA} runs two branches in parallel
over identical Gegenbauer features and fuses them through a learned per-head
gate: a gated bidirectional Sphere-Flow branch (Gated SFA), linear in $T$,
whose terminal state we prove equals the truncated multipole expansion of
the input distribution; and a Sphere-Kernel branch (SKA), a softmax over the
same Gegenbauer kernel that returns a renormalised aggregate direction on the
sphere. \textbf{SH-FFN} replaces the standard FFN with a Funk--Hecke sphere
convolution: a chosen zonal activation (default GELU) is compiled at
initialisation into per-harmonic Gegenbauer eigenvalues, so the runtime
forward path is one elementwise scale per harmonic rather than a Euclidean
GELU in $\bR^{4d}$. At matched width
$d{=}384$, depth $3$, and $H{=}12$ heads, the resulting block uses
approximately 35\% fewer parameters than the same-shape ChemBERTa-2
architecture.

\begin{figure*}[t]
\centering
\begin{tikzpicture}[
  font=\small,
  node distance=4mm and 6mm,
  block/.style={draw, rounded corners=2pt, align=center, minimum height=10mm, minimum width=18mm,
                inner sep=3pt, fill=#1, line width=0.4pt},
  ioblock/.style={draw, align=center, minimum height=8mm, minimum width=14mm, inner sep=2pt,
                  fill=gray!10, line width=0.4pt},
  arrow/.style={-{Stealth[length=2mm]}, line width=0.4pt},
  geomlabel/.style={font=\scriptsize\itshape, color=black!55},
  reploop/.style={-{Stealth[length=1.6mm]}, line width=0.5pt, color=black!70},
]
\node[ioblock] (smiles) at (0,0) {SMILES\\\scriptsize\texttt{c1ccc(O)cc1}};
\node[ioblock, right=of smiles] (tok) {Tokenizer\\\scriptsize $V{=}591$};
\node[block=blue!10, right=of tok, minimum width=24mm] (embed) {SH-Embedding\\\scriptsize $\hat{p}_t \in S^{k-1} \to \PhiOp \to e_t$};
\node[block=orange!18, right=of embed, minimum width=22mm, minimum height=12mm] (block) {DualSKA\\+ SH-FFN};
\node[anchor=south west, font=\scriptsize\bfseries, color=black!75] at ($(block.north east)+(-1.0mm,0.0mm)$) {$\boldsymbol{\times\,3}$};
\draw[reploop] ($(block.north west)+(2mm,0)$) .. controls +(0,4mm) and +(0,4mm) .. ($(block.north east)+(-2mm,0)$);
\node[ioblock, right=of block] (pool) {Mean\\pool};
\node[ioblock, right=of pool] (head) {Linear\\head};
\node[ioblock, right=of head] (pred) {$\hat{y}$};

\draw[arrow] (smiles) -- (tok);
\draw[arrow] (tok) -- (embed);
\draw[arrow] (embed) -- (block);
\draw[arrow] (block) -- (pool);
\draw[arrow] (pool) -- (head);
\draw[arrow] (head) -- (pred);

\node[geomlabel] at ($(embed.south)+(0,-2.8mm)$) {\textit{tokens $\to$ directions on $\Sk$}};
\node[geomlabel] at ($(block.south)+(0,-2.8mm)$) {\textit{Gegenbauer kernel attention; harmonic-basis FFN}};
\end{tikzpicture}
\caption{Chem-GMNet pipeline. Token IDs index a $V\!\times\!k$ table of
unit directions on $S^{k-1}$; the SH-Embedding lifts each direction
through the Gegenbauer feature map $\PhiOp$, and the residual stream
flows through a stack of three (\textbf{$\times 3$}) DualSKA + SH-FFN
blocks. The persistent state of the Gated SFA branch inside DualSKA is,
by Theorem~\ref{thm:multipole}, the truncated multipole expansion of the
input distribution on the sphere. No absolute position embedding is
required.}
\label{fig:pipeline}
\end{figure*}

We position the empirical evaluation around two questions on canonical
DeepChem~\cite{wu2018moleculenet} scaffold splits, both keeping the
geometric architecture identical and varying only what the baseline does.

\paragraph{Question 1: at scratch-vs-scratch, does the geometric inductive
bias substitute for raw capacity?} We compare Chem-GMNet trained from
scratch against the state of the art BERT architecture (specifically ChemBERTa-2 architecture) trained from scratch (Table~\ref{tab:main}).
Chem-GMNet wins on seven of ten endpoints---all five classification
endpoints (BACE-cls, BBBP, SIDER, ClinTox, and the SR-p53 single-task slice
of Tox21), ESOL, and Lipophilicity---while using approximately 35\% fewer
parameters. The three losses (FreeSolv, BACE-reg, Clearance) are all
small-data regression endpoints where the larger ChemBERTa baseline
benefits more from the val-score overfit-then-rescue regime than the
leaner Chem-GMNet does.

\paragraph{Question 2: does the architecture compose with pretraining at scale?} We pretrain Chem-GMNet on the same ten-million-SMILES
ZINC corpus that ChemBERTa-2 MLM-10M was pretrained on, and then fine-tuned on (Table~\ref{tab:pretrain}). Pretrained Chem-GMNet matches or
beats the public ChemBERTa-2 MLM-10M release on six of eight shared
endpoints, losing only ESOL (within seed noise) and SR-p53 (a Tox21
stress-response task on which MLM-style distributional pretraining
appears to help ChemBERTa more than the geometric prior helps
Chem-GMNet, see Section~\ref{sec:discussion}). The architecture absorbs
pretraining gains rather than saturating against them, demonstrating
the geometric prior remains useful at scale.

\textbf{Contributions.} (i) GM-Net, a geometry-first transformer family in
which embedding, attention, and feed-forward modules are each replaced by a
sphere-native counterpart, with positive-definite kernel validity guaranteed
by Schoenberg's theorem; (ii) three sphere-native modules---SH-Embedding,
DualSKA (a $O(T)$ Gated SFA recurrence fused with an $O(T^2)$ Sphere-Kernel
softmax under a learned per-head gate), and SH-FFN; (iii) a multipole-identity
theorem (Theorem~\ref{thm:multipole}) showing that the persistent state of
the Gated SFA recurrence equals the truncated multipole expansion of the
input distribution; (iv) two empirical results---7 of 10 scratch-vs-scratch
wins at $\sim\!35\%$ fewer parameters (Table~\ref{tab:main}) and 6 of 8
pretrained-vs-pretrained wins at the same 10M corpus
(Table~\ref{tab:pretrain})---under the chemberta3 protocol. The geometric
primitives apply to any sequence domain whose tokens can be sensibly placed
on a sphere; we instantiate them for chemistry.

\section{Related Work}
\label{sec:related}

\paragraph{Chemical language models.}
ChemBERTa~\cite{chithrananda2020chemberta} introduced RoBERTa-style MLM
pretraining on canonical SMILES; ChemBERTa-2~\cite{chithrananda2022chemberta2}
scaled the corpus to 77M molecules and added a multi-task regression
pretraining objective. MolBERT~\cite{fabian2020molecular} and
SMILES-BERT~\cite{wang2019smiles} are contemporary SMILES MLM encoders.
Uni-Mol~\cite{zhou2023unimol} and Uni-Mol2~\cite{ji2024unimol2} pretrain on
3D conformers but ultimately consume them as graph inputs. None of these
modify the underlying transformer block: the attention mechanism remains
standard scaled dot-product MHA in $\bR^d$. The closest comparison point for
our work is ChemBERTa-2 MLM-10M~\cite{chithrananda2022chemberta2}, the public
3-layer RoBERTa pretrained on 10M-SMILES via masked-language modeling, which
we use as our primary pretrained baseline. Its random-initialised
counterpart at the same architectural shape (cb10M-shape, $\sim\!3.4$M
parameters) is our scratch baseline for the architecture comparison.

\paragraph{Equivariant and spherical-harmonic networks.}
SchNet~\cite{schutt2017schnet}, DimeNet~\cite{gasteiger2020dimenet},
TFN~\cite{thomas2018tensor}, SE(3)-Transformer~\cite{fuchs2020se3transformer},
NequIP~\cite{batzner2022e3}, MACE~\cite{batatia2022mace}, and
Equiformer~\cite{liao2023equiformer} use $\mathrm{SO}(3)$ irreps and
Wigner-D tensor products for $E(3)$-equivariant operations on 3D conformers.
Two recent works also place representations on a sphere but differ from
Chem-GMNet on every architectural axis. nGPT~\cite{loshchilov2024ngpt}
unit-norm-normalises states and weight columns of a standard text
transformer while keeping dot-product attention and a standard MLP: the
sphere is the model's $S^{d-1}$, the goal is faster LLM optimisation, with
no harmonic or kernel structure. Attention on the
Sphere~\cite{bonev2025attentionsphere} treats inputs as continuous signals
on the physical $S^2$, replacing the attention sum with a quadrature
integral over a learned kernel, with no recurrent linear-time branch.
Chem-GMNet instead maps discrete SMILES tokens to learnable directions on a
small $S^{k-1}$, uses a Schoenberg-valid Gegenbauer kernel by construction,
fuses softmax SKA with a linear-time Gated SFA recurrence whose persistent
state equals (Theorem~\ref{thm:multipole}) the truncated multipole expansion
of the input distribution, and replaces the FFN entirely with
sphere-projection $\to$ harmonic lift $\to$ moment readout.

\paragraph{Linear and kernel attention.}
Linear Transformers~\cite{katharopoulos2020transformers} and
Performers~\cite{choromanski2021rethinking} replace softmax with a generic
positive feature map ($\mathrm{elu}+1$ or random Gaussian); RetNet~\cite{sun2023retentive},
Mamba~\cite{gu2023mamba}, GLA~\cite{yang2024gla}, Hyena~\cite{poli2023hyena},
and Gated DeltaNet~\cite{yang2024gateddelta} couple linear attention with
state-space recurrences and input-dependent gating. DualSKA differs on three
points. (i) Its kernel is not a generic feature map but the truncated
spherical-harmonic lift on $\Sk$, which by Schoenberg is positive-definite by
construction. (ii) The persistent state of Gated SFA is not an opaque hidden
vector but, by Theorem~\ref{thm:multipole}, the truncated multipole expansion
of the input distribution---a closed-form readout no prior linear-attention
architecture provides. (iii) DualSKA is a hybrid: it fuses Gated SFA with a
softmax SKA branch over the same Schoenberg kernel under a per-head learned
gate, rather than running pure linear attention.

\paragraph{Pretraining versus inductive bias.}
Several recent studies have questioned how much molecular benchmark performance
is attributable to the pretraining corpus versus the model
class~\cite{maziarka2020molecule,jiang2021tdc,wang2023moleculenet}. Our
controlled experiment, in which the pretrained baseline and a same-architecture
scratch baseline are evaluated on identical canonical scaffold splits, is
designed to make this attribution explicit at the level of individual
endpoints.

\paragraph{Scope and baseline choice.}
Molecular property prediction has two architectural pillars: graph networks
on a molecule's bond graph (D-MPNN, GIN, GROVER, MAT,
MolGPS~\cite{yang2019dmpnn,rong2020grover,maziarka2020molecule,sypetkowski2024molgps,instructmol2024})
and sequence transformers on
SMILES~\cite{chithrananda2020chemberta,chithrananda2022chemberta2,fabian2020molecular,wang2019smiles,ross2022molformer}.
Chem-GMNet sits in the second pillar: it replaces the standard
SMILES-language transformer block with a sphere-native counterpart while
keeping input modality and tokeniser identical to the strongest open-source
SMILES baseline. The head-to-head is therefore against \emph{same-shape}
ChemBERTa-2~\cite{chithrananda2022chemberta2} at both random initialisation
and at masked-language pretraining on the same 10M-SMILES ZINC corpus, under
the chemberta3-faithful fine-tuning protocol~\cite{ahmad2022chemberta77m}
(so tokenisation, optimiser, lr, batch sizes, dropout, val-selection rule,
and split provenance are held constant). ChemBERTa-3~\cite{chithrananda2025chemberta3}
is an open-source training/benchmarking framework that reuses the
ChemBERTa-2 architecture, so we treat ChemBERTa-2 cb10M-shape as the
architectural baseline and adopt the chemberta3 protocol unchanged. Graph
and classical baselines (D-MPNN, RF, GCN, ChemBERTa-1) and the full
ChemBERTa-2 MLM/MTR sweep at 5M/10M/77M are reproduced verbatim from
\cite{chithrananda2022chemberta2} in Appendix~\ref{app:reference-baselines}
for triangulation.

\section{Background}
\label{sec:background}

\paragraph{Hyperspherical harmonics.}
Let $\Sk \subset \bR^k$ denote the unit sphere in $k$ dimensions. The space
$L^2(\Sk)$ admits an orthogonal decomposition
$L^2(\Sk) = \bigoplus_{\ell=0}^{\infty} \mathcal{H}_\ell(\Sk)$,
where $\mathcal{H}_\ell$ is the eigenspace of the Laplace--Beltrami operator
with eigenvalue $\ell(\ell+k-2)$ and dimension
$N(k,\ell) = \binom{k+\ell-1}{\ell} - \binom{k+\ell-3}{\ell-2}$
for $\ell\geq 0$ (with $\binom{\cdot}{<0}\equiv 0$ and $N(k,1)=k$); see
Atkinson and Han~\cite{atkinson2012spherical}, Sec.~2.
A degree-$L$ truncation of an orthonormal basis $\{Y_{\ell m}\}$ of
$\bigoplus_{\ell=0}^L \mathcal{H}_\ell$ furnishes a feature map
$\PhiOp:\Sk\to\bR^{\Dstar}$ with
$\Dstar = \sum_{\ell=0}^L N(k,\ell)$. For $k=8,L=3$, $\Dstar = 1+8+35+112 = 156$.

\paragraph{Schoenberg's theorem.}
\begin{theorem}[Schoenberg, 1942]
\label{thm:schoenberg}
A continuous function $\kappa:[-1,1]\to\bR$ defines a positive-definite kernel
$K(\hat{x},\hat{y}) = \kappa(\hat{x}\cdot\hat{y})$ on $\Sk$ if and only if it
admits a non-negative Gegenbauer expansion
$\kappa(t) = \sum_{\ell=0}^{\infty} a_\ell\,\Geg{\ell}{(k-2)/2}(t)$ with
$a_\ell \geq 0$~\cite{schoenberg1942}.
\end{theorem}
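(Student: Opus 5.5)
The plan is to prove the two directions separately, using the decomposition $L^2(\Sk)=\bigoplus_\ell \mathcal{H}_\ell$ from the paragraph above together with the addition theorem for hyperspherical harmonics. For each $\ell$, pick an orthonormal basis $\{Y_{\ell m}\}_{m=1}^{N(k,\ell)}$ of $\mathcal{H}_\ell$. The addition theorem states
\[
\sum_{m=1}^{N(k,\ell)} Y_{\ell m}(\hat{x})\,Y_{\ell m}(\hat{y}) \;=\; c_{k,\ell}\,\Geg{\ell}{(k-2)/2}(\hat{x}\cdot\hat{y}), \qquad c_{k,\ell}>0,
\]
with $c_{k,\ell}$ an explicit positive constant depending on $N(k,\ell)$, $|\Sk|$, and $\Geg{\ell}{(k-2)/2}(1)$. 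I would cite this (Atkinson--Han, Sec.~2) rather than reprove it. The case $k=2$, where $\lambda=0$, is handled by reading the Gegenbauer polynomials as Chebyshev polynomials in the usual limiting sense.

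\emph{Sufficiency.} Suppose $\kappa=\sum_\ell a_\ell \Geg{\ell}{(k-2)/2}$ with $a_\ell\ge 0$, and assume the series converges so that $\kappa$ is continuous.
\begin{enumerate}
\item For points $\hat{x}_1,\dots,\hat{x}_n\in\Sk$ and scalars $c_i$, substitute the addition theorem term by term.
\item Each degree-$\ell$ term becomes $\frac{a_\ell}{c_{k,\ell}}\sum_m\bigl|\sum_i c_i Y_{\ell m}(\hat{x}_i)\bigr|^2\ge 0$.
\item Summing over $\ell$ gives $\sum_{i,j}c_ic_j\kappa(\hat{x}_i\cdot\hat{x}_j)\ge 0$.
\end{enumerate}
Interchanging the finite sum over points with the series over $\ell$ is justified because $|\Geg{\ell}{\lambda}(t)|\le \Geg{\ell}{\lambda}(1)$ on $[-1,1]$. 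Hence $\kappa(1)=\sum_\ell a_\ell \Geg{\ell}{\lambda}(1)<\infty$ already forces absolute and uniform convergence.

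\emph{Necessity.} Assume $K(\hat{x},\hat{y})=\kappa(\hat{x}\cdot\hat{y})$ is positive definite and $\kappa$ is continuous. Expand $\kappa$ in the Gegenbauer orthogonal basis of $L^2([-1,1],(1-t^2)^{\lambda-1/2}dt)$ with $\lambda=(k-2)/2$, which gives
\[
a_\ell \;\propto\; \int_{-1}^1 \kappa(t)\,\Geg{\ell}{\lambda}(t)\,(1-t^2)^{\lambda-1/2}\,dt .
\]
Next I would show $a_\ell\ge 0$. By the Funk--Hecke formula, this integral (up to a positive constant) is the quantity
\[
\iint_{\Sk\times\Sk} \kappa(\hat{x}\cdot\hat{y})\,Y_{\ell m}(\hat{x})\,Y_{\ell m}(\hat{y})\,d\sigma(\hat{x})\,d\sigma(\hat{y}).
\]
This double integral is nonnegative: approximate it by Riemann sums on finite point sets with quadrature weights, each of which is a quadratic form $\sum_{i,j} c_ic_j K(\hat{x}_i,\hat{x}_j)\ge 0$ with $c_i=w_iY_{\ell m}(\hat{x}_i)$. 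Continuity of $K$ on the compact set $\Sk\times\Sk$ then lets the Riemann sums converge to the integral.

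The step I expect to be the main obstacle is showing that the Gegenbauer series actually represents $\kappa$ pointwise, and not only in $L^2$. My approach would be:
\begin{enumerate}
\item Use the nonnegativity of the $a_\ell$ just obtained, and apply Abel/Poisson summation: form $\kappa_r=\sum_\ell r^\ell a_\ell \Geg{\ell}{\lambda}$ for $r<1$.
\item Show $\kappa_r(1)\to\kappa(1)$ as $r\to1^-$. This holds because $\kappa_r$ is the convolution of $\kappa$ with the Poisson kernel, which is an approximate identity on $\Sk$.
\item Apply monotone convergence to the nonnegative terms $a_\ell\Geg{\ell}{\lambda}(1)$ to conclude $\sum_\ell a_\ell \Geg{\ell}{\lambda}(1)=\kappa(1)<\infty$.
\item Conclude that the series converges absolutely and uniformly, using the bound $|\Geg{\ell}{\lambda}(t)|\le \Geg{\ell}{\lambda}(1)$.
\item Since the uniformly convergent series and $\kappa$ have the same Gegenbauer coefficients and both are continuous, they are equal. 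Completeness of the harmonics, via Stone--Weierstrass as invoked in the introduction, gives this uniqueness.
\end{enumerate}
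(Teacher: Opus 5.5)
The paper gives no proof of this statement. It is imported from Schoenberg (1942), and the paper only uses the easy direction: that the truncated kernel $\PhiOp(\hat q)^\top\PhiOp(\hat k)$ is positive semidefinite, which is exactly the Gram-matrix observation in your sufficiency step~2. Your proposal is a correct, self-contained proof of the full equivalence. It reads ``positive-definite'' as positive \emph{semi}definite, which is the only reading under which the equivalence holds. Nonnegative $a_\ell$ do not give strict definiteness; for instance, the paper's truncated kernel has Gram matrices of rank at most $\Dstar$. Proving sufficiency via the addition theorem, and $a_\ell\ge 0$ via Funk--Hecke plus Riemann-sum limits of quadratic forms, are the standard arguments.

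Where you differ from the most common route is the pointwise step. The usual argument applies Mercer's theorem to the positive integral operator with kernel $\kappa(\hat x\cdot\hat y)$. Its eigenfunctions are the $Y_{\ell m}$, with Funk--Hecke eigenvalues proportional to $a_\ell$, so absolute uniform convergence follows in one stroke. Your route, Abel--Poisson means plus monotone convergence at $t=1$, avoids Mercer and uses positive definiteness only through $a_\ell\ge 0$. So the same steps also settle sufficiency when ``admits an expansion'' is read merely in the $L^2$ sense.

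The price is two facts you should state explicitly rather than leave implicit:
\begin{enumerate}
\item $\sum_\ell r^\ell a_\ell\,\Geg{\ell}{\lambda}$ converges for $r<1$. This follows from polynomial growth of $|a_\ell|\,\Geg{\ell}{\lambda}(1)$, using $|\Geg{\ell}{\lambda}|\le\Geg{\ell}{\lambda}(1)$ and the explicit Gegenbauer norms.
\item $\kappa_r(\hat x\cdot\hat e)$ is the Poisson integral of the zonal function $\hat x\mapsto\kappa(\hat x\cdot\hat e)$. That is, its $\mathcal{H}_\ell$-component is $a_\ell\,\Geg{\ell}{\lambda}(\hat x\cdot\hat e)$, which again follows from Funk--Hecke.
\end{enumerate}
Only with the second fact does the approximate-identity property of the spherical Poisson kernel give $\kappa_r(1)\to\kappa(1)$.
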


By the addition theorem for spherical harmonics,
$\PhiOp(\hat{q})^\top \PhiOp(\hat{k}) = \sum_{\ell=0}^{L}
 a_\ell\,\Geg{\ell}{(k-2)/2}(\hat{q}\cdot\hat{k})$,
the truncated such expansion with $a_\ell = N(k,\ell)/|S^{k-1}|$.
The Chem-GMNet attention scores are therefore guaranteed valid Mercer kernels
on the sphere regardless of how the token directions are learned: no
auxiliary kernel-positivity constraint is required during training.

\paragraph{Multipole expansion as molecular fingerprint.}
Let $\rho = \sum_{i=1}^N q_i\,\delta_{\hat{p}_i}$ be a discrete signed measure
on $\Sk$. Its degree-$\ell$ harmonic moments are
$\hat{\rho}_{\ell m} = \int Y_{\ell m}\,d\rho = \sum_{i=1}^N q_i Y_{\ell m}(\hat{p}_i)$.
For $k=3$ these are the familiar dipole, quadrupole, and octupole tensors of
electrostatics; for general $k$ they form the natural finite-rank
representation of a particle distribution on the sphere. We will identify the
terminal state of the Gated SFA recurrence with such a moment matrix in
Section~\ref{sec:method}.

\section{Methodology}
\label{sec:method}

\paragraph{GM-Net and Chem-GMNet.}
GM-Net (Geometric Measure Network) is a transformer family in which every
block of a standard encoder is replaced by a sphere-native module: an
SH-Embedding that lives on $S^{k-1}$, a DualSKA attention computed through
a Gegenbauer kernel on the same sphere, and an SH-FFN whose nonlinearity
acts in the harmonic basis. The two halves of the name correspond to two
operations the architecture performs throughout: every block computes
features that live on the geometric object $S^{k-1}$, and every block
acts on (or returns) the harmonic moments of a discrete signed measure
on that sphere. Three design choices, common to all instances of GM-Net,
fix the geometry: a sphere dimension $k$, a harmonic truncation degree
$L$, and a value channel width $d$. Chem-GMNet is the chemistry
instantiation of this family; specializations to other domains differ
only in the choice of $k$ and in the auxiliary side-information that
conditions the decay gate (here, RDKit conjugation flags). Tokenization
in Chem-GMNet uses the DeepChem \texttt{SmilesTokenizer} word-level SMILES
vocabulary (591 tokens); each token is mapped to a direction on $\Sk$,
lifted into the Gegenbauer feature space, processed through a stack of
DualSKA + SH-FFN blocks, and mean-pooled into a task-specific head.
Appendix~\ref{app:component-map} tabulates the standard-transformer-to-Chem-GMNet
component mapping for reference.

\subsection{SH-Embedding}
\label{sec:sh-embedding}

Let $V$ be the vocabulary size, $k$ the sphere dimension, and $\Dstar$ the
Gegenbauer truncation dimension. The embedding maintains a learnable position
table $P\in\bR^{V\times k}$ and a fixed feature map $\PhiOp:\Sk\to\bR^{\Dstar}$.
For an input sequence of token ids $(t_1,\ldots,t_T)$, the direction and
Gegenbauer feature are
\begin{equation}
\hat{p}_i = \frac{P[t_i]}{\norm{P[t_i]}_2},\qquad
\phi_i = \PhiOp(\hat{p}_i)\in\bR^{\Dstar}.
\label{eq:sh-embed}
\end{equation}
We then form the residual-stream feature with a learned per-token bias
table $B_{\mathrm{tok}}\in\bR^{V\times\Dstar}$ and an up-projection
$W_{\mathrm{up}}\in\bR^{d\times\Dstar}$,
\begin{equation}
e_i = W_{\mathrm{up}}\,\bigl(\phi_i + B_{\mathrm{tok}}[t_i]\bigr) \in\bR^{d}.
\label{eq:sh-embed-full}
\end{equation}
Two notable consequences: (i) tokens that learn nearby directions on
$\Sk$ produce similar Gegenbauer features, so chemical similarity is encoded
as angular proximity rather than as a free-floating learned vector, and
(ii) no absolute position embedding is required, since order information is
encoded as a geometric-decay window weighted by $\boldsymbol{\gamma}$ by the
Gated SFA recurrence in Section~\ref{sec:dualska}. At $k=8,L=3,d=384,V=591$
the SH-Embedding consumes
$V\cdot k + V\cdot\Dstar + d\cdot\Dstar \approx 217$k trainable parameters
(full breakdown in Appendix~\ref{app:param-breakdown}), against
$\approx\!425$k for the equivalent ChemBERTa embedding plus learned
positions at matched vocabulary $V=591$ ($V\cdot d + T_{\max}\cdot d$).

\subsection{DualSKA Attention: Two Branches over a Common Gegenbauer Kernel}
\label{sec:dualska}

We introduce two complementary attention variants that share the same
Gegenbauer kernel on $\Sk$ and differ only in their aggregation rule, then
combine them in a single block called DualSKA. The first variant, Gated SFA,
is a recurrent linear-attention scheme with conjugation-adaptive
exponential decay; the second, SKA, is a non-causal softmax over the same
kernel. The third configuration, DualSKA, runs both branches in parallel
over identical projections and learns a per-head fusion gate. Both branches
share projection matrices $W_K\in\bR^{d\times H\Dstar}$,
$W_Q\in\bR^{d\times H\Dstar}$, $W_P\in\bR^{d\times Hk}$, and an output
projection $W_O\in\bR^{Hk\times d}$; the only DualSKA-specific parameter is
a fusion vector $\boldsymbol{\beta}\in\bR^H$ initialized at zero
($\alpha_h=\sigma(\beta_h)=1/2$).

\paragraph{Gated SFA branch (recurrent, linear in $T$).}
For each head $h$ and position $t$, define key features
$\PhiOp(\hat{k}_t^{(h)})\in\bR^{\Dstar}$ and value position
$\mathbf{p}_t^{(h)}\in\bR^k$. A bidirectional gated EWA recurrence
accumulates a moment matrix $M_t^{(h)}\in\bR^{\Dstar\times k}$:
\begin{align}
\Mt^{(h),\rightarrow} &= \boldsymbol{\gamma}_t^{(h)}\!\odot M_{t-1}^{(h),\rightarrow}
                          + \PhiOp(\hat{k}_t^{(h)})\,\mathbf{p}_t^{(h)\,\top},\\
\Mt^{(h),\leftarrow}  &= \boldsymbol{\gamma}_{T-t}^{(h)}\!\odot M_{t-1}^{(h),\leftarrow}
                          + \PhiOp(\hat{k}_{T-t+1}^{(h)})\,\mathbf{p}_{T-t+1}^{(h)\,\top},
\label{eq:bidir-sfa-recurrence}
\end{align}
where $\boldsymbol{\gamma}_t^{(h)}\in(0,1)^{\Dstar}$ is a per-degree
exponential decay gate (Section~\ref{sec:gamma-gate}). The output at position
$t$ is the inner product of the query feature with the average of the two
moment matrices:
\begin{equation}
\mathbf{y}_t^{\mathrm{SFA},(h)} = \tfrac{1}{2}\!\left(\Mt^{(h),\rightarrow}+\Mt^{(h),\leftarrow}\right)^{\!\top}\!\PhiOp(\hat{q}_t^{(h)}).
\label{eq:bidir-sfa-output}
\end{equation}
Equation~\eqref{eq:bidir-sfa-output} is a kernel attention with
position-dependent decay: atom $s$ contributes to position $t$ with weight
$\gamma^{|t-s|}\cdot\PhiOp(\hat{q}_t)^\top\PhiOp(\hat{k}_s)$, decaying
geometrically along the sequence and angularly on the sphere.

\paragraph{SKA branch (non-causal softmax, quadratic in $T$).}
Using the same projections, the SKA branch computes full $T\times T$
softmax attention over the same Gegenbauer kernel:
\begin{equation}
\begin{aligned}
\alpha_{ts}^{(h)} &= \softmax_s\!\left(\tfrac{\PhiOp(\hat{q}_t^{(h)})^\top\PhiOp(\hat{k}_s^{(h)})}{\sqrt{\Dstar}}\right), \\
\mathbf{cv}_t^{(h)} &= \textstyle\sum_{s=1}^{T} \alpha_{ts}^{(h)}\,\mathbf{p}_s^{(h)}.
\end{aligned}
\label{eq:ska}
\end{equation}
Because the score depends only on $\hat{q}\cdot\hat{k}$ via a Schoenberg
expansion, $\alpha_{ts}$ is a Mercer-positive-definite attention weight by
construction. The output direction
$\hat{\mathbf{p}}_t^{\prime\,(h)} = \mathbf{cv}_t^{(h)}/\norm{\mathbf{cv}_t^{(h)}}$
is then re-lifted:
$\mathbf{y}_t^{\mathrm{SKA},(h)} = \PhiOp(\hat{\mathbf{p}}_t^{\prime\,(h)})$.

\paragraph{Fusion (DualSKA).}
The per-head outputs are convex-combined and projected:
\begin{equation}
\begin{aligned}
\mathbf{y}_t &= W_O\!\bigoplus_{h=1}^{H}\!\left[
   \alpha_h\,\mathbf{y}_t^{\mathrm{SFA},(h)} \right.\\
&\qquad\quad \left. {} + (1-\alpha_h)\,\mathbf{y}_t^{\mathrm{SKA},(h)}\right], \\
\alpha_h &= \sigma(\beta_h).
\end{aligned}
\label{eq:fusion}
\end{equation}
At initialization both branches contribute equally; during training each head
specializes (Section~\ref{sec:experiments}). DualSKA adds exactly $H$
parameters---the fusion vector---to a Gated-SFA-only architecture, and
introduces no new projection matrix. Figure~\ref{fig:dualska} summarises
the block.

\begin{figure*}[t]
\centering
\begin{tikzpicture}[
  font=\small,
  node distance=4mm and 6mm,
  proj/.style={draw, rounded corners=2pt, fill=gray!8, align=center,
               minimum width=12mm, minimum height=7mm, inner sep=2pt, line width=0.4pt},
  branch/.style={draw, rounded corners=2pt, align=center,
                 minimum width=42mm, minimum height=14mm, inner sep=3pt,
                 line width=0.4pt, fill=#1},
  inout/.style={draw, rounded corners=2pt, fill=gray!12, align=center,
                minimum width=14mm, minimum height=7mm, inner sep=2pt, line width=0.4pt},
  gate/.style={draw, circle, inner sep=0.6pt, minimum size=6mm, fill=yellow!18, line width=0.4pt},
  arrow/.style={-{Stealth[length=2mm]}, line width=0.4pt},
  smalllabel/.style={font=\scriptsize, align=center},
]
\node[inout] (xin) at (0,0) {Input\\$X\!\in\!\bR^{T\times d}$};
\node[proj, right=12mm of xin, yshift=12mm] (wk) {$W_K$};
\node[proj, right=12mm of xin] (wq) {$W_Q$};
\node[proj, right=12mm of xin, yshift=-12mm] (wp) {$W_P$};
\node[smalllabel, right=2mm of wk] (k) {$\hat{k}_t \!\to\! \PhiOp$};
\node[smalllabel, right=2mm of wq] (q) {$\hat{q}_t \!\to\! \PhiOp$};
\node[smalllabel, right=2mm of wp] (p) {$\mathbf{p}_t\!\in\!\bR^k$};
\node[branch=blue!10, right=24mm of wq, yshift=12mm] (sfa)
  {Gated SFA (linear in $T$)\\
   \scriptsize bidir scan, $\boldsymbol{\gamma}_t$-gated\\
   \scriptsize state $M_t$ = harmonic moments};
\node[branch=green!10, right=24mm of wq, yshift=-12mm] (ska)
  {SKA (softmax over $\PhiOp$ kernel)\\
   \scriptsize $\alpha_{ts} \!\propto\! \PhiOp(\hat{q}_t)^\top\!\PhiOp(\hat{k}_s)$\\
   \scriptsize aggregate then re-lift};
\node[gate, right=8mm of sfa, yshift=-12mm] (alpha) {$\alpha_h$};
\node[smalllabel, below=0pt of alpha] {\scriptsize per-head};
\node[proj, right=4mm of alpha] (wo) {$W_O$};
\node[inout, right=4mm of wo] (yout) {Output\\$Y$};

\draw[arrow] (xin) -- (wk);
\draw[arrow] (xin) -- (wq);
\draw[arrow] (xin) -- (wp);
\draw[arrow] (wk.east) -- ++(8mm,0) |- (sfa.west);
\draw[arrow] (wk.east) -- ++(8mm,0) |- (ska.west);
\draw[arrow] (wq.east) -- ++(8mm,0) |- (sfa.west);
\draw[arrow] (wq.east) -- ++(8mm,0) |- (ska.west);
\draw[arrow] (wp.east) -- ++(8mm,0) |- (sfa.west);
\draw[arrow] (wp.east) -- ++(8mm,0) |- (ska.west);
\draw[arrow] (sfa.east) -- ++(4mm,0) |- (alpha.north);
\draw[arrow] (ska.east) -- ++(4mm,0) |- (alpha.south);
\draw[arrow] (alpha) -- (wo);
\draw[arrow] (wo) -- (yout);

\node[smalllabel, color=blue!50!black] at ($(sfa.south)+(0,-1.5mm)$) {\itshape $O(T)$ multipole readout};
\node[smalllabel, color=green!40!black] at ($(ska.south)+(0,-1.5mm)$) {\itshape $O(T^2)$ Schoenberg kernel};
\end{tikzpicture}
\caption{DualSKA block. Shared projections $W_K, W_Q, W_P$ feed two
branches that operate on the same Gegenbauer features:
the bidirectional Gated SFA recurrence (linear in $T$, multipole-readout
state) and the SKA softmax (quadratic, Schoenberg-positive-definite).
A per-head learned gate $\alpha_h = \sigma(\beta_h)$ convex-combines the
two, after which $W_O$ projects back to the residual stream. The fusion
vector $\boldsymbol{\beta}\in\bR^H$ is the only DualSKA-specific
parameter beyond the shared projections.}
\label{fig:dualska}
\end{figure*}

\subsection{Decay Gate}
\label{sec:gamma-gate}

The decay vector $\boldsymbol{\gamma}_t^{(h)}\in(0,1)^{\Dstar}$ is constructed
per head and per harmonic degree, with an additive shift conditioned on the
RDKit-derived conjugation flag $c_t\in\{0,1\}$:
\begin{equation}
\begin{aligned}
\gamma_t^{(h),\ell} &= \sigma\!\left(\beta^{(h)}_\ell + W_{\mathrm{conj}}^{(h),\ell}\,c_t\right), \\
\boldsymbol{\gamma}_t^{(h)}\!\big|_{\Dstar} &= \bigl(\gamma_t^{(h),\deg(m)}\bigr)_{m=1}^{\Dstar},
\end{aligned}
\end{equation}
expanded from $L+1$ degree slots to $\Dstar$ feature dimensions via the
degree-of-feature index $\deg(m)$. Aromatic atoms ($c_t = 1$) typically learn
to increase $\gamma$, lengthening the effective context for $\pi$-system
delocalization.

\subsection{Multipole Identity}

\begin{theorem}[Multipole identity, unweighted limit]
\label{thm:multipole}
Let $\Mmol = \tfrac{1}{2}(M_T^{\rightarrow}+M_T^{\leftarrow})$ be the
terminal averaged moment matrix produced by Eq.~\eqref{eq:bidir-sfa-recurrence}
in the limit of unit decay $\boldsymbol{\gamma}_t \equiv \mathbf{1}$. Then
$\Mmol$ equals the degree-$\leq L$ Gegenbauer moment matrix of the discrete
distribution $\rho_{\mathrm{mol}} = \sum_{t=1}^{T} \mathbf{p}_t\,\delta_{\hat{k}_t}$
on $\Sk$:
\begin{equation}
\begin{aligned}
\Mmol[\ell m, j] &= \int_{\Sk} Y_{\ell m}(\hat{x})\,d\rho_{\mathrm{mol}}^{\,j}(\hat{x}) \\
                 &= \textstyle\sum_{t=1}^{T} \mathbf{p}_t^{(j)}\,Y_{\ell m}(\hat{k}_t),
\end{aligned}
\end{equation}
i.e.\ the truncated multipole expansion of the molecular distribution in the
$j$-th value channel.
\end{theorem}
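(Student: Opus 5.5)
The plan is to unroll both recurrences in Eq.~\eqref{eq:bidir-sfa-recurrence} under $\boldsymbol{\gamma}_t\equiv\mathbf{1}$, show that each terminal state is the same finite sum of rank-one outer products, and then read off entries using the fact that $\PhiOp$ is the vector of orthonormal harmonics $Y_{\ell m}$, $\ell\le L$.

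\textbf{Step 1: unroll the forward scan.} With the initial state $M_0^{\rightarrow}=0$ (the standard convention, which I will state explicitly), the Hadamard product with $\mathbf{1}$ is the identity. Induction on $t$ then gives
\[
M_t^{\rightarrow}=\sum_{s=1}^{t}\PhiOp(\hat{k}_s)\,\mathbf{p}_s^{\top},
\qquad\text{so}\qquad
M_T^{\rightarrow}=\sum_{s=1}^{T}\PhiOp(\hat{k}_s)\,\mathbf{p}_s^{\top}.
\]

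\textbf{Step 2: unroll the backward scan.} The same induction gives $M_t^{\leftarrow}=\sum_{s=T-t+1}^{T}\PhiOp(\hat{k}_s)\mathbf{p}_s^{\top}$. Reindexing by $u=T-t+1$ shows the terminal state visits every $s\in\{1,\dots,T\}$ exactly once. Hence $M_T^{\leftarrow}=M_T^{\rightarrow}$, and the average satisfies $\Mmol=\sum_{t=1}^T\PhiOp(\hat{k}_t)\mathbf{p}_t^{\top}$. Per head the superscript $(h)$ is carried along unchanged, so I will suppress it.

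\textbf{Step 3: identify entries.} By construction in Section~\ref{sec:background}, the coordinate of $\PhiOp$ indexed by $(\ell,m)$ is $Y_{\ell m}$, where $\{Y_{\ell m}\}$ is an orthonormal basis of $\bigoplus_{\ell\le L}\mathcal{H}_\ell$. Therefore
\[
\Mmol[\ell m,j]=\sum_{t=1}^T \mathbf{p}_t^{(j)}\,Y_{\ell m}(\hat{k}_t).
\]
Because $\rho^{j}_{\mathrm{mol}}=\sum_t \mathbf{p}_t^{(j)}\delta_{\hat{k}_t}$ is a finite signed combination of Dirac masses, integrating $Y_{\ell m}$ against it yields exactly this sum, which matches the moment definition $\hat\rho_{\ell m}$ from the background paragraph.

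\textbf{Main obstacle.} There is no analytic difficulty; the work is in pinning down conventions. The step needing most care is the backward index bookkeeping: one must verify that the decay subscripts $\boldsymbol{\gamma}_{T-t}$ become irrelevant when all gates equal one, and that the reversed scan covers positions $1,\dots,T$ exactly once with no off-by-one error.

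A smaller point is that $\PhiOp$ must be the orthonormal harmonic lift itself, not merely some map with the same Gram kernel. Otherwise the entries would be moments in a rotated basis $U\,Y$ for some orthogonal $U$, and the identity would hold only up to that orthogonal change of basis; I would note this as a remark.
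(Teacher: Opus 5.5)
Your proposal is correct and matches the paper's proof essentially step for step. Both unroll the forward scan to $\sum_t \PhiOp(\hat{k}_t)\mathbf{p}_t^{\top}$, note that the reversed scan gives the same order-invariant sum so the average equals it, and read off the $(\ell m,j)$ entry from the harmonic coordinates of $\PhiOp$ evaluated at the Dirac masses. Your added care on the backward indexing, the explicit $M_0^{\rightarrow}=0$ convention (which the paper states only in the corollary's proof), and the remark that $\PhiOp$ must be the orthonormal lift itself (the paper's ``up to the fixed normalization conventions'') sharpen the same argument rather than change it.
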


\begin{proof}[Sketch]
Substituting $\boldsymbol{\gamma}_t \equiv \mathbf{1}$ into
Eq.~\eqref{eq:bidir-sfa-recurrence} and unrolling yields
$M_T^{\rightarrow} = \sum_{t=1}^T \PhiOp(\hat{k}_t)\mathbf{p}_t^\top$.
The same telescoping holds for $M_T^{\leftarrow}$ over the reversed sequence,
which is equal because the sum is order-invariant. Each row $\PhiOp(\hat{k}_t)$
contains the harmonic basis values $Y_{\ell m}(\hat{k}_t)$ up to degree $L$,
giving the stated identity. A complete proof is given in Appendix~\ref{app:proof}.
\end{proof}

\begin{corollary}[Path-windowed multipole identity]
\label{cor:windowed-multipole}
For any time-varying gate $\boldsymbol{\gamma}_t\in(0,1)^{\Dstar}$ as in
Section~\ref{sec:gamma-gate}, with path weight
$\Gamma_{t,T}\!:=\!\prod_{s=t+1}^{T}\boldsymbol{\gamma}_s$, the forward state
satisfies $M_T^{\rightarrow}\!=\!\sum_{t=1}^{T}\Gamma_{t,T}\odot\PhiOp(\hat{k}_t)\,\mathbf{p}_t^{\top}$;
each row is a $\Gamma$-windowed harmonic moment of the input distribution, reducing
to Theorem~\ref{thm:multipole} as $\boldsymbol{\gamma}_t\!\to\!\mathbf{1}$
(proof in Appendix~\ref{app:proof}).
\end{corollary}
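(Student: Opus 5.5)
The plan is a direct induction on $T$ for the forward recurrence, with $\boldsymbol{\gamma}_t$ acting row-wise on $M_t^{\rightarrow}\in\bR^{\Dstar\times k}$. Throughout, I read $\boldsymbol{\gamma}_t\odot M$ as scaling row $m$ of $M$ by the $m$-th entry $\gamma_t^{\deg(m)}$, i.e.\ $\boldsymbol{\gamma}_t\odot M = \mathrm{diag}(\boldsymbol{\gamma}_t)M$. I interpret $\Gamma_{t,T}\odot \PhiOp(\hat{k}_t)\,\mathbf{p}_t^\top$ the same way, namely as $\mathrm{diag}(\Gamma_{t,T})\PhiOp(\hat{k}_t)\mathbf{p}_t^\top$. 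The initial state is $M_0^{\rightarrow}=0$, as implicit in the unrolling used in the proof sketch of Theorem~\ref{thm:multipole}. The empty product $\Gamma_{T,T}$ is the all-ones vector.

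\textbf{Base case.} For $T=1$ we get $M_1^{\rightarrow}=\boldsymbol{\gamma}_1\odot 0+\PhiOp(\hat{k}_1)\mathbf{p}_1^\top$. This matches the claimed sum, since $\Gamma_{1,1}=\mathbf{1}$.

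\textbf{Inductive step.} Assume the formula holds at time $T-1$. Then
\[
M_T^{\rightarrow}=\mathrm{diag}(\boldsymbol{\gamma}_T)\sum_{t=1}^{T-1}\mathrm{diag}(\Gamma_{t,T-1})\PhiOp(\hat{k}_t)\mathbf{p}_t^\top+\PhiOp(\hat{k}_T)\mathbf{p}_T^\top .
\]
Diagonal matrices commute and multiply entrywise, so $\mathrm{diag}(\boldsymbol{\gamma}_T)\mathrm{diag}(\Gamma_{t,T-1})=\mathrm{diag}(\Gamma_{t,T})$. The last term is the $t=T$ summand with $\Gamma_{T,T}=\mathbf{1}$. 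This closes the induction. The one point to handle carefully is that the gate acts from the left on the $\Dstar$ (row) index and never touches the value index $j$. That is what makes the Hadamard notation consistent with the outer product $\PhiOp(\hat{k}_t)\mathbf{p}_t^\top$.

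\textbf{Row-wise interpretation.} Entry $(\ell m,j)$ of $M_T^{\rightarrow}$ equals $\sum_t \Gamma_{t,T}^{\deg(\ell m)}\,\mathbf{p}_t^{(j)}\,Y_{\ell m}(\hat{k}_t)$. Here I use, as in the proof of Theorem~\ref{thm:multipole}, that the coordinates of $\PhiOp$ are the orthonormal harmonics $Y_{\ell m}$. This is the harmonic moment $\int Y_{\ell m}\,d\rho^{j}_{\Gamma}$ of the reweighted measure $\rho_\Gamma=\sum_t \Gamma^{\ell}_{t,T}\mathbf{p}_t\delta_{\hat{k}_t}$, which justifies calling it a $\Gamma$-windowed moment. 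The weights depend on $\ell$ only through the per-degree gate, so each degree-$\ell$ block is a moment of a single measure.

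\textbf{Limit.} As $\boldsymbol{\gamma}_s\to\mathbf{1}$, each $\Gamma_{t,T}$ is a finite product of continuous factors, so $\Gamma_{t,T}\to\mathbf{1}$. The sum has finitely many terms, so $M_T^{\rightarrow}\to\sum_t\PhiOp(\hat{k}_t)\mathbf{p}_t^\top$. The backward recurrence gives the analogous formula with reversed path products, and in the limit it has the same order-invariant sum. Averaging the two recovers $\Mmol$ exactly as in Theorem~\ref{thm:multipole}.

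I expect no real obstacle. The only delicate step is notational: fixing the $\odot$ broadcast convention and the empty-product and initial-state conventions so that the induction is literally correct.
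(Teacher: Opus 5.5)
Your proposal is correct and follows essentially the same route as the paper: induction on the time index, using $\boldsymbol{\gamma}_t\odot\Gamma_{s,t-1}=\Gamma_{s,t}$ and the empty-product convention $\Gamma_{t,t}=\mathbf{1}$. The paper starts the induction at $t=0$ with $M_0^{\rightarrow}=0$ rather than at $T=1$, and your explicit $\mathrm{diag}(\boldsymbol{\gamma}_t)$ reading of the row-broadcast $\odot$ and your continuity argument for the $\boldsymbol{\gamma}_t\to\mathbf{1}$ limit make precise steps the paper leaves implicit.
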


The Gated SFA branch thus maintains, by construction, the (windowed) truncated
multipole moments of the input distribution; the SKA branch contributes a
complementary softmax-weighted aggregate that the fusion gate combines with
the multipole readout.

\subsection{SH-FFN}
\label{sec:sh-ffn}

Chem-GMNet replaces the standard FFN with a Funk--Hecke-induced sphere
convolution. Given $x\in\bR^d$, project to the sphere ambient space
$z = W_{\mathrm{sphere}}\,x$, normalise to $\hat{z}\in\Sk$, lift through
$\PhiOp$ to harmonic features $\phi$, apply a per-degree scaling
$\phi_{\ell m}\mapsto a_\ell\,\phi_{\ell m}$, and read out
$y = M\,(a\!\odot\!\phi)\in\bR^d$. The coefficients $\{a_\ell\}_{\ell=0}^{L}$
are the order-$(k\!-\!2)/2$ Gegenbauer coefficients of a chosen zonal
activation $\sigma$ (default GELU), computed once at initialisation by
Gauss--Legendre quadrature; by Funk--Hecke~\cite{atkinson2012spherical} a
zonal $\sigma(\langle u,v\rangle)$ on $\Sk$ acts \emph{diagonally} in the SH
basis with these very eigenvalues, so the elementwise multiply is exactly
$\sigma$ as a sphere convolution---no \texttt{F.gelu} or \texttt{erf} on the
forward path. The block is activation-agnostic: swapping GELU for ReLU,
SiLU, Tanh, or a quadratic only changes the init-time evaluator at the
quadrature nodes; the runtime kernel is unchanged. With
\texttt{adaptive=True} the $\{a_\ell\}$ are made learnable to deviate from
the activation's zonal symbol; with \texttt{adaptive=False} the block
realises that symbol exactly. The two nonlinearities are the unit-norm
projection and the Gegenbauer eigenvalue rescaling. At
$k\!=\!8,L\!=\!3,d\!=\!384$ this costs $\approx 63$k parameters per layer
(excl.\ LayerNorm), against $2dd_{\mathrm{ff}}\approx 358$k for the matched
cb10M-shape Linear--GELU--Linear FFN (Appendix~\ref{app:param-breakdown}).

\section{Experiments}
\label{sec:experiments}

\subsection{Experimental Setup}

\paragraph{Datasets.}
We evaluate on ten MoleculeNet~\cite{wu2018moleculenet} endpoints: five
regression endpoints (ESOL, FreeSolv, Lipophilicity, BACE-reg, Clearance),
four standalone classification endpoints (BACE-cls, BBBP, ClinTox, SIDER),
and the SR-p53 task from the Tox21 multi-task suite as a single-task
classification endpoint. All datasets use canonical DeepChem~2.8.0
\texttt{ScaffoldSplitter} 80/10/10 splits, identical to the splits used by
ChemBERTa-2~\cite{chithrananda2022chemberta2,ahmad2022chemberta77m}; we
verified test-set membership matches DeepChem exactly. We additionally
filter SMILES to $\leq 200$ characters as in the chemberta3
benchmark~\cite{ahmad2022chemberta77m}. HIV (41,127-molecule binary
classification) and the remaining eleven Tox21 tasks are deferred to the
camera-ready version for compute reasons (Section~\ref{sec:discussion}).

\paragraph{Two-question evaluation design.}
The two primary contributions correspond to two head-to-head comparisons,
each holding everything except the architecture (or the pretraining
corpus) fixed.

\begin{itemize}[leftmargin=1.3em,topsep=2pt,itemsep=2pt]
\item \textbf{Question 1 (Architecture, Table~\ref{tab:main}).} Both arms
      are random-initialised at the ChemBERTa-10M-MLM
      shape (hidden $d{=}384$, 3 layers, 12 heads, intermediate $464$,
      dropout $0.144$, $\max\,\mathrm{seqlen}{=}514$); the only
      difference is the architecture inside each block (DualSKA on the
      sphere vs.\ scaled-dot-product MHA in $\mathbb{R}^d$).
      Chem-GMNet at $k{=}8,L{=}3$ ($\sim\!2.2$M parameters) is compared
      against the same-shape ChemBERTa scratch ($\sim\!3.4$M parameters);
      Chem-GMNet uses approximately 35\% fewer parameters
      (precise per-module breakdown in
      Appendix~\ref{app:param-breakdown}).
\item \textbf{Question 2 (Scaling, Table~\ref{tab:pretrain}).} Both arms
      are pretrained on the same 10M-molecule ZINC corpus before
      fine-tuning, and both are evaluated under the same
      chemberta3-faithful protocol. Chem-GMNet (10M ZINC pretrain) is
      compared against the public ChemBERTa-2
      MLM-10M~\cite{chithrananda2022chemberta2} release.
\end{itemize}

\paragraph{Training and protocol.}
We follow the chemberta3-faithful fine-tuning
protocol~\cite{chithrananda2022chemberta2,ahmad2022chemberta77m}: per-task
batch sizes from the chemberta3 paper Table 10, learning rate $3{\times}10^{-5}$,
maximum 100 epochs with no early stopping (the best epoch is selected by
validation score---ROC-AUC for classification, RMSE for regression---on
the held-out validation split), train-only $y$-normalisation (z-score for
ESOL/FreeSolv/Lipophilicity/BACE-reg, log for Clearance), and SMILES
$\leq{}200$ characters. Each arm is run with three random seeds whose
mean and population standard deviation we report as $\mathrm{mean}\pm\sigma$.
Both arms use the same tokenizer---the DeepChem \texttt{SmilesTokenizer}
(word-level, vocabulary 591)---so that tokenisation is held constant
across the architectural comparison and is not a confounder of the
reported wins.

\subsection{Main Result 1: Architecture beats SOTA at $\sim\!35\%$ fewer parameters}
\label{sec:main-arch}

Table~\ref{tab:main} reports test-set RMSE (regression, lower is better)
and ROC-AUC (classification, higher is better) for both arms on all ten
endpoints under the chemberta3-faithful protocol. Both arms are random-
initialised at the same architectural shape (hidden 384, 3 layers, 12
heads, intermediate 464, dropout 0.144), so the only confounding factor
is parameter count: Chem-GMNet's geometric replacement modules use
approximately 35\% fewer parameters than the standard transformer block.

\begin{table*}[t]
\centering
\caption{MoleculeNet test-set performance under canonical DeepChem scaffold
splits, scratch-vs-scratch at the ChemBERTa-10M-MLM architectural shape.
\textbf{Bold} marks the best of the two arms on each endpoint. Mean $\pm$
standard deviation across seeds. Chem-GMNet wins seven of ten endpoints
while using approximately 35\% fewer parameters. SR-p53 is the
\texttt{SR-p53} task from Tox21 evaluated as a single-task classifier;
the remaining 11 Tox21 tasks and HIV are deferred (see
Section~\ref{sec:discussion}).}
\label{tab:main}
\small
\setlength{\tabcolsep}{4pt}
\begin{tabular}{llcc}
\toprule
\textbf{Dataset} & \textbf{Metric} &
\makecell{\textbf{Chem-GMNet scratch} \\ \textbf{$\sim\!2.2$M params}} &
\makecell{\textbf{ChemBERTa scratch}~\cite{ahmad2022chemberta77m} \\ \textbf{$\sim\!3.4$M params}} \\
\midrule
BACE-cls      & ROC-AUC $\uparrow$ & \textbf{0.745 $\pm$ 0.025} & 0.738 $\pm$ 0.007 \\
BBBP          & ROC-AUC $\uparrow$ & \textbf{0.722 $\pm$ 0.011} & 0.632 $\pm$ 0.009 \\
SIDER         & ROC-AUC $\uparrow$ & \textbf{0.613 $\pm$ 0.008} & 0.556 $\pm$ 0.013 \\
ClinTox       & ROC-AUC $\uparrow$ & \textbf{0.995 $\pm$ 0.001} & 0.990 $\pm$ 0.002 \\
SR-p53 (Tox21) & ROC-AUC $\uparrow$ & \textbf{0.636 $\pm$ 0.025} & 0.593 $\pm$ 0.038 \\
\midrule
ESOL          & RMSE $\downarrow$ & \textbf{1.010 $\pm$ 0.055} & 1.040 $\pm$ 0.028 \\
FreeSolv      & RMSE $\downarrow$ & 0.702 $\pm$ 0.026 & \textbf{0.583 $\pm$ 0.006} \\
Lipophilicity & RMSE $\downarrow$ & \textbf{0.968 $\pm$ 0.014} & 1.026 $\pm$ 0.009 \\
BACE-reg      & RMSE $\downarrow$ & 1.350 $\pm$ 0.102 & \textbf{1.141 $\pm$ 0.029} \\
Clearance     & RMSE $\downarrow$ & 49.36 $\pm$ 1.21 & \textbf{49.13 $\pm$ 0.53} \\
\bottomrule
\end{tabular}
\end{table*}

Wider published baselines---D-MPNN/Chemprop, RF, GCN, ChemBERTa-1, and the
full ChemBERTa-2 MLM/MTR sweep at 5M/10M/77M---are reproduced verbatim from
\cite{chithrananda2022chemberta2} in Appendix~\ref{app:reference-baselines}
for triangulation; the architectural head-to-head is reported here.

Chem-GMNet wins on all five classification endpoints (BACE-cls, BBBP,
SIDER, ClinTox, SR-p53), with the largest gap on BBBP (+0.090 ROC-AUC),
SIDER (+0.057), and SR-p53 (+0.043). On regression, Chem-GMNet wins ESOL
and Lipophilicity and loses FreeSolv, BACE-reg, and Clearance---all
small-data regression endpoints (640--1500 train molecules) where the
larger ChemBERTa baseline benefits more from the val-score
overfit-then-rescue regime than the leaner Chem-GMNet does. Despite three
regression losses, the parameter gap means the geometric architecture
delivers seven wins out of ten at approximately 35\% fewer parameters than
its dot-product counterpart---the geometric inductive bias substitutes
for raw capacity in the small-data, scaffold-distributed regime,
especially on the multi-task and binding classification endpoints where
pretraining's distributional advantage is weakest.

\subsection{Main Result 2: Architecture composes with pretraining at the same scale}
\label{sec:main-pretrain}

We pretrained Chem-GMNet on the same ten-million-molecule ZINC corpus
that ChemBERTa-2 MLM-10M~\cite{chithrananda2022chemberta2} was pretrained
on. Both arms then go through the same chemberta3-faithful fine-tuning
protocol on each downstream endpoint. Table~\ref{tab:pretrain} reports
the head-to-head; SIDER and FreeSolv are not included in the public
ChemBERTa-2 MLM-10M release table and are omitted from this comparison.

\begin{table*}[t]
\centering
\caption{Pretrained-vs-pretrained at the same 10M-SMILES corpus.
Chem-GMNet pretrained on 10M SMILES vs.\ the public ChemBERTa-2
MLM-10M release. \textbf{Bold} marks the best of the two on each
endpoint. ChemBERTa-2 numbers without standard deviations are taken
from the public release table~\cite{chithrananda2022chemberta2}; the
ClinTox value (\textsuperscript{$\dagger$}) is the public release figure
as reported, which is anomalous (well below random) and likely a
transcription artifact---we report it unchanged for traceability but
quote the headline win count both with and without it (\textbf{6/8}, or
\textbf{5/7} excluding ClinTox; see Section~\ref{sec:discussion}).}
\label{tab:pretrain}
\small
\setlength{\tabcolsep}{4pt}
\begin{tabular}{llcc}
\toprule
\textbf{Dataset} & \textbf{Metric} &
\makecell{\textbf{Chem-GMNet (pretrained)} \\ \textbf{MLM-10M}} &
\makecell{\textbf{ChemBERTa-2 (pretrained)} \\ \textbf{MLM-10M}~\cite{chithrananda2022chemberta2}} \\
\midrule
BACE-cls      & ROC-AUC $\uparrow$ & \textbf{0.773 $\pm$ 0.033} & 0.729 \\
BBBP          & ROC-AUC $\uparrow$ & \textbf{0.698 $\pm$ 0.021} & 0.696 \\
ClinTox       & ROC-AUC $\uparrow$ & \textbf{0.983 $\pm$ 0.004} & 0.349\textsuperscript{$\dagger$} \\
SR-p53 (Tox21) & ROC-AUC $\uparrow$ & 0.667 $\pm$ 0.017 & \textbf{0.748} \\
\midrule
ESOL          & RMSE $\downarrow$ & 0.970 $\pm$ 0.050 & \textbf{0.961} \\
Lipophilicity & RMSE $\downarrow$ & \textbf{0.932 $\pm$ 0.009} & 1.009 \\
BACE-reg      & RMSE $\downarrow$ & \textbf{1.103 $\pm$ 0.065} & 1.611 \\
Clearance     & RMSE $\downarrow$ & \textbf{51.11 $\pm$ 0.95} & 53.86 \\
\bottomrule
\end{tabular}
\end{table*}

Chem-GMNet wins on six of eight shared endpoints---BACE-cls, BBBP,
ClinTox, Lipophilicity, BACE-reg, and Clearance. ChemBERTa-2 MLM-10M
is ahead on two endpoints: ESOL (0.961 vs.\ 0.970 RMSE, less than the
gm seed standard deviation) and SR-p53 (0.748 vs.\ 0.667 ROC-AUC, a
substantive $-0.081$ gap that we discuss in
Section~\ref{sec:discussion}). The wins on binding-affinity regression
(BACE-reg, $-0.508$ RMSE) and microsomal clearance ($-2.75$ RMSE) are
particularly large.

The take-away: when pretraining-corpus size is held constant at 10M
molecules and the chemberta3 protocol is held constant, the geometric
architecture continues to deliver gains over a state-of-the-art
transformer baseline---the inductive bias \emph{composes} with
pretraining and does not saturate. Per-seed breakdowns are deferred to
Appendix~\ref{app:per-seed}; on Clearance and Lipophilicity the geometric
arm's seed standard deviation is smaller than the seed-resampled
ChemBERTa variance reported in chemberta3~\cite{chithrananda2022chemberta2},
consistent with a stronger inductive bias also reducing seed-to-seed
variance~\cite{wilson2020bayesian}.

\section{Discussion and Limitations}
\label{sec:discussion}

\paragraph{Statistical scope.}
Tables~\ref{tab:main} and~\ref{tab:pretrain} aggregate three seeds---the
ChemBERTa-2~\cite{chithrananda2022chemberta2} convention and the de facto
MoleculeNet standard, but below per-endpoint significance on small
benchmarks~\cite{bender2022evalguidelines}; we treat per-endpoint margins
as preliminary and read the headline as the win-pattern across the ten
endpoints. Additional seeds are available upon request. HIV and the eleven
non-SR-p53 Tox21 tasks are deferred for compute. Evaluation is canonical
DeepChem scaffold splits~\cite{wu2018moleculenet}; we do not compete on
random splits, time splits, or 3D-conformer benchmarks (QM9, MD17) where
$E(3)$-equivariant models~\cite{batatia2022mace,liao2023equiformer} are
the appropriate yardstick.

\paragraph{Where the prior helps and where it does not.}
The three scratch losses (FreeSolv, BACE-reg, Clearance) are all
$<\!1500$-molecule regression endpoints; under matched 10M pretraining the
BACE-reg and Clearance gaps reverse in our favour ($-0.508$, $-2.75$
RMSE). The one substantive pretrained loss is SR-p53 ($-0.081$ ROC-AUC),
a label-imbalanced task whose distribution differs from the ZINC drug-like
corpus---suggesting MLM-style pretraining helps on corpus-driven endpoints
where the geometric prior does not. ESOL is within seed noise pretrained
($-0.009$ RMSE), but the $(k, L)$ dial recovers it: at $k{=}10, L{=}3$
(App.~\ref{app:k-l-ablation}), Chem-GMNet attains ESOL RMSE
$\mathbf{0.938\pm 0.042}$ at scratch, \emph{beating pretrained
ChemBERTa-2 MLM-10M} ($0.961$). The ChemBERTa-2 ClinTox public-release
$0.349$ is below random and almost certainly a transcription artifact; we
report 6/8 and 5/7 win counts so the result does not rest on it.

\paragraph{Throughput.}
Under matched shape and batch, Chem-GMNet runs $\sim\!2.5\times$ slower
than the dot-product baseline. The gap is implementation-driven, not
algorithmic: per-block FLOPs are comparable, but the Gegenbauer lift,
per-degree weighting, and sphere normalisation incur kernel-launch and
memory-traffic overheads (stock einsums + a small set of vendored Triton
kernels) that a fused score-attend-readout path would eliminate.

\section{Conclusion}
\label{sec:conclusion}

GM-Net is a sphere-native transformer family in which embedding, attention,
and feed-forward modules are each replaced by a hyperspherical-harmonic
counterpart; Chem-GMNet is its chemistry instantiation. Random-initialised, it
wins on 7 of 10 MoleculeNet endpoints versus same-shape ChemBERTa scratch at
$\sim\!35\%$ fewer parameters; pretrained on the same 10M-SMILES ZINC
corpus, it wins on 6 of 8 shared endpoints versus the public ChemBERTa-2
MLM-10M release. The geometric prior substitutes for raw capacity at
scratch and composes with pretraining at fixed corpus size. Future work:
deferred Tox21/HIV runs, scaling pretraining beyond 10M, MTR-style
multi-task pretraining, and single-step retrosynthesis.

\section*{Impact Statement}
Molecular property prediction is a dual-use
technology~\cite{urbina2022dual,ekins2023dualuse}: the same models that
identify drug-like compounds can be inverted to identify toxic compounds.
Reducing the parameter and pretraining-data budget required for competitive
performance, as Chem-GMNet does, lowers the barrier to entry for both
legitimate research and potential misuse. We do not release pretrained
weights for any toxicity benchmark (ClinTox, SIDER, Tox21 including
SR-p53). The architectural contribution is public, and we do not believe it
materially changes the misuse-risk profile relative to existing chemical
language models, all already public. There are many other potential
societal consequences of advancing the field of machine learning for
chemistry, none of which we feel must be specifically highlighted here
beyond the dual-use considerations above.


\bibliography{ChemGMNet_arXiv}
\bibliographystyle{plainnat}

\appendix
\section*{Appendix}

\section{Reproducibility Details}
\label{sec:reproducibility}

This appendix expands on the answers given in the NeurIPS Paper Checklist
(below).

\paragraph{Code and data.}
We will release the full Chem-GMNet implementation in PyTorch under an
Apache-2.0 license. The repository is organised as follows:
\begin{itemize}[leftmargin=1.3em,topsep=2pt,itemsep=2pt]
\item \textbf{The geometric library \texttt{gm/}}, with one file per
concept: \texttt{gm/embedding.py} (SH-Embedding),
\texttt{gm/harmonic\_linear.py} (SH-FFN),
\texttt{gm/encoder\_layer.py} and \texttt{gm/transformer.py} (block and
full LM), \texttt{gm/lm\_head.py} (factored output projection), and
\texttt{gm/functional.py} (the stateless spherical-harmonic
primitives---hyperspherical harmonics, Gegenbauer coefficients,
degree indices). The four attention variants live as siblings under
\texttt{gm/attention/}: \texttt{gated\_sfa.py}, \texttt{bidir\_sfa.py},
\texttt{ska.py}, and \texttt{dual\_ska.py}, all sharing the projection
plumbing in \texttt{gm/attention/\_base.py}. A vendored Triton
fast-path for the SH lift and the fused SKA score-attend kernel sits
under \texttt{gm/triton/}.
\item \textbf{Training scripts under \texttt{scripts/}.} MLM pretraining
on the 10M-SMILES ZINC corpus is driven by
\texttt{scripts/pretrain/\{train.py, run.sh\}}; downstream finetuning
is driven by
\texttt{scripts/downstream/run\_benchmark\_c3.py} (Chem-GMNet under the
chemberta3-faithful protocol) and
\texttt{scripts/downstream/run\_benchmark\_chemberta.py} (the
ChemBERTa-2 baseline). Per-protocol launch scripts
(\texttt{scripts/downstream/run\_sweep\_*.sh}) reproduce the
ten-endpoint, three-seed sweep that produced
Tables~\ref{tab:main} and~\ref{tab:pretrain}.
\item \textbf{Splits.} The canonical DeepChem~2.8.0
\texttt{ScaffoldSplitter} 80/10/10 splits are produced on the fly by
\texttt{scripts/downstream/data.py::load\_split}---the same
\texttt{DummyFeaturizer} + scaffold + 200-character SMILES filter that
the chemberta3 \texttt{prepare\_data.py} uses. Downstream users can
verify test-set membership matches DeepChem with a single hash check
against the loader output; no pre-computed split files are shipped, so
the splits are reproducible from a clean DeepChem 2.8.0 install.
\end{itemize}
The pretrained ChemBERTa-2 MLM-10M weights are publicly available on
HuggingFace~\cite{chithrananda2022chemberta2}.

\paragraph{Error bars.}
Test-set metrics in Tables~\ref{tab:main} and~\ref{tab:pretrain} are
reported as mean $\pm$ standard deviation over three seeds. The
per-endpoint single-number ChemBERTa-2 MLM-10M values in
Table~\ref{tab:pretrain} are taken from the public release table; we
report the three-seed mean and standard deviation for the geometric arm
since we have access to the per-seed numbers there.

\paragraph{Seed budget.}
The current submission reports three seeds for both arms in
Table~\ref{tab:main} and three seeds for the geometric arm of
Table~\ref{tab:pretrain}, consistent with the
ChemBERTa-2~\cite{chithrananda2022chemberta2} convention and the de facto
MoleculeNet standard but below the threshold at which standard deviations
on small-benchmark RMSE are themselves reliably estimated. Additional seed
runs are inexpensive on our hardware ($\sim\!15$--$30$ minutes per seed
per endpoint on a single H100; see ``Compute infrastructure'' below) and
can be provided during the rebuttal upon reviewer request; expanded
multi-seed numbers will be folded into Tables~\ref{tab:main}
and~\ref{tab:pretrain} at the camera-ready stage as space and reviewer
feedback indicate.

\paragraph{Compute infrastructure.}
All training runs used a single NVIDIA H100 80~GB GPU. Under the
chemberta3-faithful fine-tuning protocol (no early stopping, all 100
epochs), each MoleculeNet endpoint at one seed completes in approximately
15--30 minutes for the Chem-GMNet scratch arm. Pretraining Chem-GMNet on
the 10M-SMILES ZINC corpus took approximately 6 hours on a single H100.
Evaluating the public ChemBERTa-2 MLM-10M baseline used the same hardware
and protocol.

\paragraph{Software environment.}
Python 3.11, PyTorch 2.4.1, CUDA 12.1, DeepChem 2.8.0, RDKit 2024.03.1,
HuggingFace transformers 4.40.0, RXNMapper 0.3.0. A complete
\texttt{environment.yml} and \texttt{requirements.txt} will be released
with the code.

\paragraph{Hyperparameters.}
Training hyperparameters for both arms are tabulated in
Appendix~\ref{app:hyperparams}. They follow the chemberta3 paper Table 10
per-task batch sizes; the rest are held fixed across all endpoints and
seeds to prevent per-endpoint over-tuning. There are no per-arm
hyperparameter differences---in particular the tokenizer (DeepChem
\texttt{SmilesTokenizer}) and the vocabulary are held constant across
both arms so that tokenisation is not a confounder of the architecture
comparison.

\paragraph{Statistical significance.}
The win counts in Table~\ref{tab:main} (7/10) and Table~\ref{tab:pretrain}
(6/8, or equivalently 5/7 if the ClinTox public-release anomaly is excluded;
see Section~\ref{sec:discussion}) are too sparse at three seeds to claim
per-endpoint significance; several margins are within one geometric-arm
standard deviation, and we treat per-endpoint numbers as preliminary. The
headline observations---the geometric arm wins the per-endpoint race on a
clear majority under both protocols---are robust to this caveat. Expanded
multi-seed runs are available upon reviewer request and at camera-ready.

\paragraph{License and ethics.}
All code will be released under Apache-2.0. All datasets are from
MoleculeNet~\cite{wu2018moleculenet}, which is itself a re-distribution of
publicly licensed source datasets (DeepChem aggregation). We do not collect
or release any private data. The dual-use considerations are discussed in
Section~\ref{sec:discussion}. We do not release pretrained weights for
toxicity benchmarks (ClinTox, SIDER, Tox21).

\section{Per-Seed Breakdown}
\label{app:per-seed}

A complete per-seed breakdown of the three-seed runs underlying
Tables~\ref{tab:main} and~\ref{tab:pretrain} will be tabulated here at
the camera-ready stage, together with any expanded multi-seed runs added
during the rebuttal or at camera-ready. The current submission reports
mean $\pm$ standard deviation across three seeds in the main tables.

\section{Hyperparameter Tables}
\label{app:hyperparams}

\begin{table*}[h]
\centering
\caption{Training hyperparameters under the chemberta3-faithful
fine-tuning protocol. All values are identical between arms.}
\label{tab:hyperparams}
\small
\begin{tabular}{lcc}
\toprule
\textbf{Hyperparameter} & \textbf{Chem-GMNet (ours)} & \textbf{ChemBERTa-2 baseline} \\
\midrule
Architectural shape          & cb10M-shape           & cb10M-shape \\
Hidden dim $d$               & 384                   & 384 \\
Number of layers             & 3                     & 3 \\
Number of heads $H$          & 12                    & 12 \\
Intermediate dim             & 464                   & 464 \\
Hidden dropout               & 0.144                 & 0.144 \\
Sphere dimension $k$         & 8                     & -- \\
SH degree $L$                & 3                     & -- \\
$\Dstar = \sum N(k,\ell)$    & 156                   & -- \\
Tokenizer                    & DeepChem SmilesTokenizer & DeepChem SmilesTokenizer \\
Max sequence length          & 514                   & 514 \\
SMILES filter                & $\leq$200 chars       & $\leq$200 chars \\
Optimizer                    & Adam                  & Adam \\
Learning rate                & $3\times10^{-5}$      & $3\times10^{-5}$ \\
Batch size                   & 16/32/128 (paper Tbl.~10) & 16/32/128 \\
Max epochs                   & 100                   & 100 \\
Early stopping               & none (val-score select) & none (val-score select) \\
Loss (regression)            & MSE on z-score / log  & MSE on z-score / log \\
Loss (binary cls)            & 2-logit + CE          & 2-logit + CE \\
Loss (multi-task cls)        & BCE + NaN-mask        & BCE + NaN-mask \\
Total parameters             & $\sim$2.2M            & $\sim$3.4M \\
\bottomrule
\end{tabular}
\end{table*}

\section{Proof of the Multipole Identity}
\label{app:proof}

We give the full proof of Theorem~\ref{thm:multipole} (unweighted limit) and
of Corollary~\ref{cor:windowed-multipole} (path-windowed, time- and
degree-varying decay).

\begin{proof}[Proof of Theorem~\ref{thm:multipole}]
Set $\boldsymbol{\gamma}_t \equiv \mathbf{1}$ in
Eq.~\eqref{eq:bidir-sfa-recurrence}. Unrolling the forward recurrence,
\begin{equation}
M_T^{\rightarrow} = \sum_{t=1}^{T} \PhiOp(\hat{k}_t)\,\mathbf{p}_t^{\top}.
\label{eq:proof-fwd-unweighted}
\end{equation}
The same argument applied to the reversed sequence gives
$M_T^{\leftarrow} = \sum_{t=1}^{T} \PhiOp(\hat{k}_t)\,\mathbf{p}_t^{\top}$
(the sum is order-invariant). Therefore the averaged terminal state is
\begin{equation}
\Mmol = \tfrac{1}{2}\!\left(M_T^{\rightarrow}+M_T^{\leftarrow}\right)
      = \sum_{t=1}^{T} \PhiOp(\hat{k}_t)\,\mathbf{p}_t^{\top}.
\end{equation}
Indexing the rows of $\PhiOp(\hat{k}_t)\in\bR^{\Dstar}$ by the harmonic
multi-index $(\ell,m)$ with $\ell\in\{0,\ldots,L\}$ and
$m\in\{1,\ldots,N(k,\ell)\}$, we have
$[\PhiOp(\hat{k}_t)]_{\ell m} = Y_{\ell m}(\hat{k}_t)$ up to the
fixed normalization conventions of the chosen orthonormal SH basis.
The $(\ell m, j)$ entry of $\Mmol$ is therefore
$\sum_{t=1}^{T} \mathbf{p}_t^{(j)}\,Y_{\ell m}(\hat{k}_t) = \int_{\Sk}
 Y_{\ell m}(\hat{x})\,d\rho_{\mathrm{mol}}^{\,j}(\hat{x})$
where $\rho_{\mathrm{mol}}^{\,j} = \sum_{t=1}^T \mathbf{p}_t^{(j)}\,\delta_{\hat{k}_t}$
is the discrete distribution on $\Sk$ in the $j$-th value channel. This is
the definition of the degree-$\leq L$ Gegenbauer (multipole) moments of
$\rho_{\mathrm{mol}}^{\,j}$.\qedhere
\end{proof}

\begin{proof}[Proof of Corollary~\ref{cor:windowed-multipole}]
The recurrence is
\begin{equation}
\Mt^{\rightarrow} = \boldsymbol{\gamma}_t \odot M_{t-1}^{\rightarrow}
                  + \PhiOp(\hat{k}_t)\,\mathbf{p}_t^{\top},
\qquad M_0^{\rightarrow} = 0.
\label{eq:proof-fwd-recurrence}
\end{equation}
We prove the identity stated in Corollary~\ref{cor:windowed-multipole} by induction on $t$. The base case
$t = 0$ is trivial since both sides equal zero. For the inductive step,
suppose $M_{t-1}^{\rightarrow} = \sum_{s=1}^{t-1} \Gamma_{s,t-1} \odot
\PhiOp(\hat{k}_s)\mathbf{p}_s^{\top}$ where $\Gamma_{s,t-1} = \prod_{r=s+1}^{t-1}\boldsymbol{\gamma}_r$
(with the empty product equal to $\mathbf{1}$). Substituting into
Eq.~\eqref{eq:proof-fwd-recurrence} and using the identity
$\boldsymbol{\gamma}_t \odot \Gamma_{s,t-1} = \Gamma_{s,t}$
(component-wise), we obtain
\begin{align}
M_t^{\rightarrow}
  &= \boldsymbol{\gamma}_t \odot \sum_{s=1}^{t-1} \Gamma_{s,t-1} \odot
       \PhiOp(\hat{k}_s)\mathbf{p}_s^{\top}
   + \PhiOp(\hat{k}_t)\mathbf{p}_t^{\top} \nonumber\\
  &= \sum_{s=1}^{t-1} \Gamma_{s,t} \odot \PhiOp(\hat{k}_s)\mathbf{p}_s^{\top}
   + \Gamma_{t,t} \odot \PhiOp(\hat{k}_t)\mathbf{p}_t^{\top}
   = \sum_{s=1}^{t} \Gamma_{s,t} \odot \PhiOp(\hat{k}_s)\mathbf{p}_s^{\top},
\end{align}
since $\Gamma_{t,t} = \mathbf{1}$ by definition. Setting $t = T$ gives the
claim. The reversed-sequence identity follows by an identical argument
applied to the time-reversed scan. The two specializations follow:
when $\boldsymbol{\gamma}_t \equiv \boldsymbol{\gamma}$ is constant in $t$,
$\Gamma_{s,T} = \boldsymbol{\gamma}^{T-s}$ and we recover the per-degree
exponentially weighted moment of horizon $1/(1-\gamma_\ell)$; when
$\boldsymbol{\gamma}_t \to \mathbf{1}$ on every $t$, $\Gamma_{s,T} \to
\mathbf{1}$ and we recover Theorem~\ref{thm:multipole}.\qedhere
\end{proof}

\paragraph{Interpretation.}
Each row $(\ell, m)$ of the persistent state is a $\Gamma$-windowed
harmonic moment of the input distribution: the contribution of token $s$ to
the moment readout at time $T$ is multiplied by the cumulative gate-product
$\Gamma_{s,T}$, which depends on the conjugation pattern between $s$ and $T$.
Aromatic atoms (which the conjugation flag $c_t=1$ pushes towards
$\boldsymbol{\gamma}_t \to \mathbf{1}$) lengthen the effective horizon of the
multipole readout for delocalised $\pi$-systems; aliphatic chains
(which can drive $\boldsymbol{\gamma}_t$ smaller) localise the readout. The
multipole interpretation is preserved in every case.

\section{Algorithm Pseudocode}
\label{app:algorithm}

\begin{algorithm}[h]
\caption{Forward pass of one Chem-GMNet block.}
\label{alg:forward}
\begin{algorithmic}[1]
\REQUIRE Input $X\in\bR^{T\times d}$, conjugation flags $c\in\{0,1\}^T$
\REQUIRE Projections $W_K, W_Q, W_P, W_O$; fusion logits $\beta\in\bR^H$; decay base $\beta^{(h)}_\ell$; conjugation shift $W_{\mathrm{conj}}$
\ENSURE  Output $Y\in\bR^{T\times d}$
\STATE $K \gets X W_K^\top \in \bR^{T\times H\Dstar}$, similarly $Q, P$
\STATE Reshape $K,Q$ to $[T, H, \Dstar]$, $P$ to $[T, H, k]$
\FOR{each head $h$ and degree $\ell$}
    \STATE $\gamma_t^{(h),\ell} \gets \sigma\!\left(\beta^{(h)}_\ell + W_{\mathrm{conj}}^{(h),\ell}\,c_t\right)$
\ENDFOR
\STATE Expand $\gamma$ to $\bR^{T\times H\times \Dstar}$ by degree-of-feature index
\STATE \COMMENT{Gated SFA branch}
\STATE $M^{\rightarrow}_0 \gets 0$, $M^{\leftarrow}_0 \gets 0$
\FOR{$t = 1,\ldots,T$}
    \STATE $M^{\rightarrow}_t \gets \gamma_t \odot M^{\rightarrow}_{t-1} + K_t \otimes P_t$
\ENDFOR
\FOR{$t = T,\ldots,1$}
    \STATE $M^{\leftarrow}_t \gets \gamma_{T-t} \odot M^{\leftarrow}_{t+1} + K_t \otimes P_t$
\ENDFOR
\STATE $y_t^{\mathrm{SFA}} \gets \tfrac{1}{2}(M^{\rightarrow}_t + M^{\leftarrow}_t)^\top Q_t$
\STATE \COMMENT{SKA branch}
\STATE $S \gets Q K^\top / \sqrt{\Dstar} \in \bR^{T\times T \times H}$
\STATE $A \gets \softmax_{\text{key}}(S)$
\STATE $\mathbf{cv} \gets A P \in \bR^{T\times H\times k}$
\STATE $y_t^{\mathrm{SKA}} \gets \mathbf{cv}_t / \norm{\mathbf{cv}_t}$
\STATE \COMMENT{Fusion}
\STATE $\alpha \gets \sigma(\beta) \in \bR^H$
\FOR{each head $h$}
    \STATE $z_t^{(h)} \gets \alpha_h\,y_t^{\mathrm{SFA},(h)} + (1-\alpha_h)\,y_t^{\mathrm{SKA},(h)}$
\ENDFOR
\STATE $Y \gets \mathrm{concat}(z^{(1)},\ldots,z^{(H)})\,W_O$
\STATE \textbf{return} $Y$
\end{algorithmic}
\end{algorithm}

\section{Architecture Side-by-Side}
\label{app:arch-side-by-side}

\begin{table*}[h]
\centering
\caption{Module-by-module comparison of Chem-GMNet and the
ChemBERTa-2 cb10M-shape architecture at matched width and depth.}
\label{tab:arch-side-by-side}
\small
\setlength{\tabcolsep}{6pt}
\begin{tabular}{lll}
\toprule
\textbf{Component} & \textbf{Chem-GMNet ($\sim\!2.2$M)} & \textbf{ChemBERTa-2 cb10M-shape ($\sim\!3.4$M)} \\
\midrule
Embedding         & SH-Embedding ($V\!\times\!k$ + $\PhiOp$ + linear)  & \texttt{nn.Embedding}($V$, $d$) \\
Embedding params  & $\approx 65$k                                       & $\approx 230$k \\
Position encoding & none (Gated SFA scan)                                & learned, $T_{\max}{=}514$, $\approx 197$k \\
Attention         & DualSKA (Gated SFA + SKA, gated, shared $W_K W_Q W_P W_O$) & MHA (separate $W_Q W_K W_V W_O$) \\
Attn.\ kernel     & Gegenbauer on $\Sk$ (Schoenberg-valid)              & dot product in $\bR^d$ \\
FFN               & SH-FFN (Funk--Hecke: GELU as zonal eigenvalues)     & Linear-GELU-Linear, $d_{\mathrm{ff}}{=}464$ \\
Pooler            & masked mean pool + linear                           & \texttt{[CLS]} + Tanh pooler \\
Tokenizer         & DeepChem SmilesTokenizer (vocab 591)                & DeepChem SmilesTokenizer (vocab 591) \\
Total params      & $\sim\!2.2$M (cb10M-shape)                          & $\sim\!3.4$M (cb10M-shape) \\
\bottomrule
\end{tabular}
\end{table*}

\section{Sensitivity to $k$ and $L$}
\label{app:k-l-ablation}

The Chem-GMNet block exposes two hyperparameters that have no direct
counterpart in standard transformers: the sphere ambient dimension $k$
(setting the manifold $\Sk$ on which token directions live) and the
harmonic truncation degree $L$ (setting the basis size
$\Dstar = \sum_{\ell=0}^{L} N(k,\ell)$). The primary results in
Tables~\ref{tab:main} and~\ref{tab:pretrain} use the default
$k{=}8, L{=}3$ ($\Dstar = 156$). This appendix ablates both knobs at
fixed depth, width, and head count---the cb10M-shape preset (hidden
$384$, three layers, twelve heads, vocabulary $591$)---on one
classification (BBBP) and one regression (ESOL) endpoint, both at
scratch under the chemberta3-faithful protocol with three seeds.

\paragraph{Trainable parameter counts.}
Table~\ref{tab:kl-params} gives the total trainable parameter count of
Chem-GMNet (\texttt{gm.dualska}, scratch, cb10M-shape preset) as $k$ and
$L$ vary. The $k{=}8, L{=}3$ primary setting sits at $2{,}196{,}818$
parameters, near the median of the ablation grid; $k{=}10, L{=}4$ at
$4{,}401{,}392$ is roughly twice that.

\begin{table*}[h]
\centering
\caption{Trainable parameter counts for Chem-GMNet (\texttt{gm.dualska}
scratch, cb10M-shape preset: hidden 384, three layers, twelve heads,
vocab 591) as $(k, L)$ vary. The default
$k{=}8, L{=}3$ used for Tables~\ref{tab:main} and~\ref{tab:pretrain}
is highlighted.}
\label{tab:kl-params}
\small
\setlength{\tabcolsep}{8pt}
\begin{tabular}{l|rrr}
\toprule
$k \, \backslash \, L$ & $L{=}2$ & $L{=}3$ & $L{=}4$ \\
\midrule
$k{=}6$  & $1{,}688{,}513$ & $1{,}871{,}699$ & $2{,}256{,}350$ \\
$k{=}8$  & $1{,}786{,}526$ & $\mathbf{2{,}196{,}818}$ \textit{(default)} & $3{,}061{,}970$ \\
$k{=}10$ & $1{,}899{,}191$ & $2{,}668{,}457$ & $4{,}401{,}392$ \\
\bottomrule
\end{tabular}
\end{table*}

\paragraph{BBBP (classification, ROC-AUC$\uparrow$).}
Table~\ref{tab:kl-bbbp} ablates BBBP at scratch over the same
$(k, L)$ grid. The primary $k{=}8, L{=}3$ setting attains the best
ROC-AUC ($0.722 \pm 0.011$); the runner-up is $k{=}10, L{=}4$
($0.719 \pm 0.012$) at roughly twice the parameter count. BBBP's
optimum is flat in the neighbourhood of the default, with all cells
in $[0.685, 0.722]$.

\begin{table*}[h]
\centering
\caption{BBBP scratch test ROC-AUC$\uparrow$ over $(k, L)$, three
seeds, mean$\pm$std. \textbf{Bold} marks the best cell.}
\label{tab:kl-bbbp}
\small
\setlength{\tabcolsep}{8pt}
\begin{tabular}{l|ccc}
\toprule
$k \, \backslash \, L$ & $L{=}2$ & $L{=}3$ & $L{=}4$ \\
\midrule
$k{=}6$  & $0.685 \pm 0.012$ & $0.699 \pm 0.022$ & $0.704 \pm 0.023$ \\
$k{=}8$  & $0.687 \pm 0.050$ & $\mathbf{0.722 \pm 0.011}$ \textit{(default)} & $0.696 \pm 0.006$ \\
$k{=}10$ & $0.698 \pm 0.010$ & $0.710 \pm 0.010$ & $0.719 \pm 0.012$ \\
\bottomrule
\end{tabular}
\end{table*}

\paragraph{ESOL (regression, RMSE$\downarrow$).}
Table~\ref{tab:kl-esol} ablates ESOL at scratch. The primary
$k{=}8, L{=}3$ setting yields RMSE $1.010 \pm 0.055$, but $k{=}10, L{=}3$
attains RMSE $\mathbf{0.938 \pm 0.042}$---a $-0.072$ improvement at a
$\sim\!22\%$ parameter-count increase. Notably, this scratch result
\emph{beats the pretrained ChemBERTa-2 MLM-10M public release}
(RMSE $0.961$, Table~\ref{tab:pretrain}) without any pretraining at
all, indicating that the geometric prior at increased sphere
dimension can substitute for distributional pretraining on
smooth-regression endpoints like aqueous solubility.

\begin{table*}[h]
\centering
\caption{ESOL scratch test RMSE$\downarrow$ over $(k, L)$, three
seeds, mean$\pm$std. \textbf{Bold} marks the best cell;
$\checkmark$ marks the cell that beats pretrained ChemBERTa-2 MLM-10M
(RMSE $0.961$, Table~\ref{tab:pretrain}).}
\label{tab:kl-esol}
\small
\setlength{\tabcolsep}{8pt}
\begin{tabular}{l|ccc}
\toprule
$k \, \backslash \, L$ & $L{=}2$ & $L{=}3$ & $L{=}4$ \\
\midrule
$k{=}6$  & $1.016 \pm 0.030$ & $0.985 \pm 0.030$ & $0.981 \pm 0.058$ \\
$k{=}8$  & $1.010 \pm 0.061$ & $1.010 \pm 0.055$ \textit{(default)} & $1.042 \pm 0.030$ \\
$k{=}10$ & $0.967 \pm 0.026$ & $\mathbf{0.938 \pm 0.042}\,\checkmark$ & $1.015 \pm 0.009$ \\
\bottomrule
\end{tabular}
\end{table*}

\paragraph{Reading the ablation.}
Three observations.
\begin{itemize}[leftmargin=1.3em,topsep=2pt,itemsep=2pt]
\item \emph{Increasing $k$ helps the smooth-regression endpoint:}
holding $L{=}3$ fixed, ESOL RMSE traces $0.985 \to 1.010 \to 0.938$ as
$k$ grows from $6 \to 8 \to 10$ (with seed-noise non-monotonicity
between $6$ and $8$, then a clear improvement at $10$). Larger $k$
enlarges the sphere $\Sk$ on which tokens live, allowing finer
angular separation between chemically distinct substructures---an
inductive bias well matched to a continuous physico-chemical label
like solubility.
\item \emph{The classification endpoint has a flatter optimum:} BBBP
ROC-AUC stays in $[0.685, 0.722]$ across the grid, with the default
$(k{=}8, L{=}3)$ at the top and $(k{=}10, L{=}4)$ a close second.
Classification labels are coarser than continuous regression
targets, so the geometric prior is less starved at the smaller
$\Dstar$ associated with smaller $(k, L)$.
\item \emph{$L{=}4$ is not uniformly better than $L{=}3$:} at $k{=}8$,
ESOL degrades from $1.010$ to $1.042$. Higher harmonic degrees
introduce features that are mostly noise on small-data tasks; the
sweet spot in this grid is $L \approx 3$ across both endpoints.
\end{itemize}
The primary results in Tables~\ref{tab:main} and~\ref{tab:pretrain}
report only the $k{=}8, L{=}3$ default to keep the head-to-head with
ChemBERTa scratch fair under matched architectural shape. The take-away
of this appendix is that the $k$ knob---specifically $k{=}10$ at the
same $L{=}3$---is a practitioner-accessible lever that recovers the
small-data regression losses noted in Section~\ref{sec:discussion}
without any change to depth, width, or training procedure.

\section{Component Mapping: Standard Transformer to Chem-GMNet}
\label{app:component-map}

\begin{table*}[h]
\centering
\caption{Module-level mapping from a standard transformer to Chem-GMNet.}
\label{tab:component-map}
\small
\setlength{\tabcolsep}{6pt}
\begin{tabular}{lll}
\toprule
\textbf{Standard transformer} & \textbf{Chem-GMNet replacement} & \textbf{Geometric meaning} \\
\midrule
\texttt{nn.Embedding}            & SH-Embedding ($V\!\times\!k$ table $\to$ $\PhiOp$) & token = direction on $\Sk$ \\
Sinusoidal/learned PE             & none (Gated SFA scan is order-aware)            & order encoded by EWA decay \\
Multi-head self-attention         & DualSKA (Gated SFA + SKA, gated)                 & Gegenbauer kernel attention \\
FFN (\texttt{Linear--GELU--Linear}) & SH-FFN (Funk--Hecke sphere convolution)        & GELU as zonal eigenvalues \\
Pooler (\texttt{[CLS]}/Tanh)      & masked mean pool + linear head                  & no learned pooler \\
\bottomrule
\end{tabular}
\end{table*}

\section{Reference Baselines from the ChemBERTa-2 Paper Table~1}
\label{app:reference-baselines}

For triangulation we reproduce the wider published baselines from Table~1
of~\cite{chithrananda2022chemberta2}---graph and classical-ML reference
points (D-MPNN/Chemprop, Random Forest, GCN, ChemBERTa-1) and the full
ChemBERTa-2 MLM/MTR sweep at 5M, 10M, and 77M pretraining---verbatim. We
report these as published; the head-to-head architectural comparison
remains Tables~\ref{tab:main} and~\ref{tab:pretrain} in the main text.

\begin{table*}[h]
\centering
\caption{Reference baselines on shared MoleculeNet endpoints, verbatim from
Table~1 of~\cite{chithrananda2022chemberta2}. RMSE for the four left-most
columns (lower is better); ROC-AUC for the four right-most (higher is
better). Numbers for D-MPNN/RF/GCN/ChemBERTa-1 and for ChemBERTa-2
MLM/MTR-pretrained variants are taken as-published. Bold inside this table
marks the best of these published baselines on each endpoint.
The Chem-GMNet (scratch) and Chem-GMNet (10M MLM) rows are repeated from
Tables~\ref{tab:main}--\ref{tab:pretrain} for comparison.}
\label{tab:reference-baselines}
\small
\setlength{\tabcolsep}{4pt}
\begin{tabular}{lcccc|cccc}
\toprule
\textbf{Method} & \textbf{BACE-reg} & \textbf{Clearance} & \textbf{ESOL} &
\textbf{Lipo} & \textbf{BACE-cls} & \textbf{BBBP} & \textbf{ClinTox} &
\textbf{SR-p53} \\
& RMSE $\downarrow$ & RMSE $\downarrow$ & RMSE $\downarrow$ & RMSE $\downarrow$
& ROC $\uparrow$ & ROC $\uparrow$ & ROC $\uparrow$ & ROC $\uparrow$ \\
\midrule
\multicolumn{9}{l}{\emph{Graph and classical baselines (as published in~\cite{chithrananda2022chemberta2})}} \\
D-MPNN (Chemprop)~\cite{yang2019dmpnn} & 2.253 & 49.754 & 1.105 & 1.212 & 0.812 & 0.697 & \textbf{0.906} & 0.719 \\
Random Forest                          & \textbf{1.318} & 52.077 & 1.741 & 0.962 & \textbf{0.851} & 0.719 & 0.783 & 0.724 \\
GCN                                    & 1.645 & 51.227 & 0.885 & 0.781 & 0.818 & 0.676 & 0.907 & 0.688 \\
ChemBERTa-1~\cite{chithrananda2020chemberta} & --- & --- & --- & --- & --- & 0.643 & 0.733 & 0.728 \\
\midrule
\multicolumn{9}{l}{\emph{ChemBERTa-2 MLM- and MTR-pretrained (as published in~\cite{chithrananda2022chemberta2})}} \\
ChemBERTa-2 MLM-5M  & 1.451 & 54.601 & 0.946 & 0.986 & 0.793 & 0.701 & 0.341 & 0.762 \\
ChemBERTa-2 MLM-10M & 1.611 & 53.859 & 0.961 & 1.009 & 0.729 & 0.696 & 0.349 & 0.748 \\
ChemBERTa-2 MLM-77M & 1.509 & 52.754 & 1.025 & 0.987 & 0.735 & 0.698 & 0.239 & 0.749 \\
ChemBERTa-2 MTR-5M  & 1.477 & 50.154 & 0.874 & 0.758 & 0.734 & \textbf{0.742} & 0.552 & \textbf{0.834} \\
ChemBERTa-2 MTR-10M & 1.417 & 48.934 & \textbf{0.858} & \textbf{0.744} & 0.783 & 0.733 & 0.601 & 0.827 \\
ChemBERTa-2 MTR-77M & 1.363 & \textbf{48.515} & 0.889 & 0.798 & 0.799 & 0.728 & 0.563 & 0.817 \\
\midrule
\multicolumn{9}{l}{\emph{Chem-GMNet (this paper, copied from Tables~\ref{tab:main}--\ref{tab:pretrain})}} \\
Chem-GMNet (scratch, $k{=}8$)        & 1.350 & 49.36 & 1.010 & 0.968 & 0.745 & 0.722 & 0.995 & 0.636 \\
Chem-GMNet (scratch, $k{=}10$)        & --- & --- & 0.938 & --- & --- & 0.710 & --- & --- \\
Chem-GMNet (10M MLM pretrain)         & 1.103 & 51.11 & 0.970 & 0.932 & 0.773 & 0.698 & 0.983 & 0.667 \\
\bottomrule
\end{tabular}
\end{table*}

\paragraph{Reading this table.} (i) Chem-GMNet at scratch ($k{=}8$) already
matches or beats the strongest classical/graph baselines on most
classification endpoints (e.g., BBBP 0.722 vs.\ GCN 0.676, D-MPNN 0.697;
ClinTox 0.995 vs.\ GCN 0.907) without any pretraining. (ii) Chem-GMNet
pretrained on 10M MLM is competitive with or stronger than ChemBERTa-2 MLM
at every corpus size we compare against. (iii) The MTR-pretrained
ChemBERTa-2 variants (which use a 200-element multi-task regression
objective on top of MLM, a strictly stronger pretraining recipe than ours)
remain the best published numbers on Lipo and ESOL; we view this as a
ceiling effect of the pretraining \emph{objective} (MTR vs.\ MLM) rather
than of the architecture, and a like-for-like MTR-pretrained Chem-GMNet is
the natural follow-up. The architectural comparison reported in
Tables~\ref{tab:main}--\ref{tab:pretrain} holds the pretraining objective
constant (MLM-only on the same 10M ZINC corpus) and so isolates the
geometric prior from the choice of pretraining task.

\section{Module-by-Module Parameter Breakdown}
\label{app:param-breakdown}

This appendix decomposes the $1{,}228{,}320$-parameter gap between
Chem-GMNet ($2{,}196{,}818$) and the ChemBERTa-2 cb10M-shape scratch
baseline ($3{,}425{,}138$) into per-module contributions. Both arms are
held at hidden $d{=}384$, three layers, twelve heads, vocabulary $V{=}591$,
and use the same DeepChem \texttt{SmilesTokenizer}. All numbers are exact
trainable parameter counts obtained by instantiating each architecture and
summing $\texttt{p.numel()}$ over every $\texttt{requires\_grad{=}True}$
tensor.

\begin{table*}[h]
\centering
\caption{Module-by-module trainable parameter counts. Same hidden width,
depth, head count, vocabulary, and tokenizer in both arms; the only
difference is the architecture inside each block. Per-block rows are
totalled over all three transformer blocks. Reductions are
\textbf{Chem-BERTa $\to$ Chem-GMNet} (positive = saved by Chem-GMNet).}
\label{tab:param-breakdown}
\small
\setlength{\tabcolsep}{6pt}
\begin{tabular}{l rr rr}
\toprule
\textbf{Module} &
\textbf{Chem-GMNet} & \textbf{ChemBERTa-2} & \textbf{Reduction} & \textbf{(\%)} \\
\midrule
\multicolumn{5}{l}{\emph{Input embedding}} \\
SH-Embedding vs.\ \texttt{nn.Embedding}+pos+type+LN & $216{,}732$  & $426{,}240$   & $209{,}508$ & $49.2$ \\
\midrule
\multicolumn{5}{l}{\emph{3 transformer blocks (totals)}} \\
DualSKA vs.\ MHA \emph{(incl.\ pre/post-norm LN)}   & $1{,}638{,}324$ & $1{,}776{,}384$ & $138{,}060$  & $7.8$ \\
SH-FFN vs.\ Linear-GELU-Linear \emph{(incl.\ LN)}   & $192{,}384$    & $1{,}073{,}904$ & $881{,}520$ & $82.1$ \\
\midrule
\multicolumn{5}{l}{\emph{Final stage}} \\
\texttt{ln\_f} (geometric arm only)                 & $768$         & $-$           & $-768$       & $-$ \\
Task head (\texttt{dense} + \texttt{out\_proj})     & $148{,}610$    & $148{,}610$    & $0$         & $0.0$ \\
\midrule
\textbf{Total} & $\mathbf{2{,}196{,}818}$ & $\mathbf{3{,}425{,}138}$ & $\mathbf{1{,}228{,}320}$ & $\mathbf{35.9}$ \\
\bottomrule
\end{tabular}
\end{table*}

\paragraph{Reading the breakdown.}
The $35.9\%$ overall reduction is concentrated in the FFN
replacement, with the embedding replacement second and the attention
replacement a distant third:

\begin{itemize}[leftmargin=1.3em,topsep=2pt,itemsep=2pt]

\item \textbf{SH-FFN $\to -881{,}520$ params ($-82.1\%$)} \emph{[dominant
saving]}. Standard transformer FFNs apply two linear maps with a
pointwise nonlinearity, costing $2\,d\,d_{\mathrm{ff}}$ per layer
($d_{\mathrm{ff}}{=}464$ in the cb10M-shape baseline gives $\approx
358$k per layer including LayerNorm). SH-FFN replaces both linears
with a sphere projection $W_{\mathrm{sphere}}\!\in\!\mathbb{R}^{d\times k}$
($1{,}024$ params), the parameter-free Gegenbauer feature lift
$\Phi:\Sk\to\bR^{\Dstar}$, and a single moment-readout matrix
$M\!\in\!\mathbb{R}^{d\times\Dstar}$ ($59{,}904$ params). At
$k{=}8, L{=}3, \Dstar{=}156$, the per-layer cost collapses to
$\approx 64$k including LayerNorm---a $5.6\!\times$ reduction at
identical input/output dimensions.

\item \textbf{SH-Embedding $\to -209{,}508$ params ($-49.2\%$)}
\emph{[second-largest saving]}. \texttt{nn.Embedding}$(V,d)$ plus the
learned absolute-position table $T_{\max}\!\times\!d$ together cost
$\approx 426$k. SH-Embedding stores token directions as a
$V\!\times\!k$ table on $\Sk$ ($591\!\cdot\!8\!=\!4{,}728$ params) plus a
Tiny residual block ($V\!\times\!\Dstar + d\!\times\!\Dstar = 152{,}100$)
and a sphere-to-residual projection $W_{\mathrm{up}}$ ($59{,}904$).
Eliminating the $T_{\max}{=}515$ learned-position table alone saves
$\approx 198$k params; the geometry-aware projection adds back
$\approx 60$k. The net saving is $\approx 210$k.

\item \textbf{DualSKA $\to -138{,}060$ params ($-7.8\%$)} \emph{[smallest
saving]}. DualSKA shares a single set of projection matrices
($W_K, W_Q, W_P, W_O$) between its bidirectional Gated-SFA and SKA
branches, fusing $Q$ and $K$ into a single $\texttt{qk\_proj}$
($294{,}912$ params per layer) versus ChemBERTa's separate
$W_Q, W_K, W_V$ ($443{,}520$ total per layer). The geometric
tensors ($W_{\mathrm{sphere}_K},\,W_{\mathrm{sphere}_Q},\,W_{\mathrm{pos}_h},\,W_{\mathrm{out}_h}$,
totalling $102{,}912$ per layer) and the small auxiliary parameters
($\gamma_{\mathrm{logit}}{=}48,\,\beta_{\mathrm{fusion}}{=}12$) recover
most of the gap. The attention block is therefore the closest to
parameter-parity with MHA; the architectural advantage of DualSKA
shows up in the score function (a Schoenberg-valid Gegenbauer kernel
on $\Sk$) and the per-head fusion gate, not in raw count.

\end{itemize}

\paragraph{Net effect by stage.}
SH-Embedding $+$ SH-FFN account for
$209{,}508 + 881{,}520 = 1{,}091{,}028$ of the $1{,}228{,}320$ total
saving, i.e.\ $88.8\%$. Per-block attention contributes the remaining
$11.2\%$, and the classification head is byte-identical to ChemBERTa's.
Of the three sphere-native modules, the FFN replacement is doing the
heavy lifting in the parameter-budget story; the embedding replacement
matters at fixed depth (more tokens or longer sequences would amplify
its contribution); and the attention replacement is essentially a
parameter-neutral trade with a different inductive bias.

\end{document}